\pdfoutput=1
\documentclass[letterpaper, 10 pt, conference]{ieeeconf} 
\IEEEoverridecommandlockouts                              

\makeatletter
\let\NAT@parse\undefined
\makeatother

\usepackage{amssymb,amsmath}

\usepackage[sort&compress,numbers,sectionbib]{natbib}

\usepackage{cite} 
\usepackage{booktabs}
\usepackage[font=small]{caption}
\usepackage{subcaption}
\usepackage{float}
\usepackage{graphicx}

\usepackage{verbatimbox}
\usepackage{multirow}
\usepackage{balance}
\usepackage{algorithm}
\usepackage{algorithmic}
\usepackage{comment}

\usepackage{dsfont}

\newcommand{\figref}[1]{Fig.~\ref{#1}}
\newcommand{\tabref}[1]{Table~\ref{#1}}

\usepackage[hyphens]{url}

\usepackage{hyperref}
\hypersetup{bookmarksopen,bookmarksnumbered,
pdfpagemode=UseOutlines,
colorlinks=true,
linkcolor=teal,
anchorcolor=teal,
citecolor=teal,
filecolor=teal,
menucolor=teal,
urlcolor=teal,
breaklinks=true
}
\newtheorem{theorem}{Theorem}[section]

\usepackage[dvipsnames]{xcolor}
\definecolor{darkblue}{RGB}{0.15,0.15,0.55}
\definecolor{lightgrey}{RGB}{0.75,0.75,0.75}

\definecolor{myred}{RGB}{215,48,39}
\definecolor{myblue}{RGB}{69,117,180}
\definecolor{myorange}{RGB}{252,141,89}
\definecolor{mylightblue}{RGB}{145,191,219}

\definecolor{MYlightblue}{RGB}{217,95,2} 
\definecolor{MYdarkblue}{RGB}{117,112,179} 
\definecolor{MYgreen}{RGB}{27,158,119}

\usepackage{color,soul}
\usepackage{svg}

\newcommand{\xxnote}[3]{}
\ifx\hidenotes\undefined
  \usepackage{color}
  \renewcommand{\xxnote}[3]{\color{#2}{#1: #3}}
\fi

\usepackage[normalem]{ulem}
\useunder{\uline}{\ul}{}

\usepackage{bm}


\title{\LARGE \bf ReloPush-BOSS: Optimization-guided Nonmonotone Rearrangement Planning for a Car-like Robot Pusher}
\author{Jeeho Ahn and Christoforos Mavrogiannis\thanks{The authors are with the Department of Robotics, University of Michigan, Ann Arbor, USA. Email: \tt\small \{jeeho, cmavro\}@umich.edu}}

\let\oldsubsection\subsection
\renewcommand{\subsection}[1]{%
    \vspace{-3pt}
    \oldsubsection{#1}%
    \vspace{-3pt}
}

\begin{document}

\maketitle
\thispagestyle{empty}
\pagestyle{empty}

\begin{abstract}

We focus on multi-object rearrangement planning in densely cluttered environments using a car-like robot pusher. The combination of kinematic, geometric and physics constraints underlying this domain results in challenging \emph{nonmonotone} problem instances which demand breaking each manipulation action into multiple parts to achieve a desired object rearrangement. Prior work tackles such instances by planning \emph{prerelocations}, temporary object displacements that enable constraint satisfaction, but deciding where to \emph{prerelocate} remains difficult due to local minima leading to infeasible or high-cost paths. Our key insight is that these minima can be avoided by steering a prerelocation optimization toward low-cost regions informed by Dubins path classification.
These optimized prerelocations are integrated into an object traversability graph that encodes kinematic, geometric, and pushing constraints. Searching this graph in a depth-first fashion results in efficient, feasible rearrangement sequences. Across a series of densely cluttered scenarios with up to 13 objects, our framework, \emph{ReloPush-BOSS}, exhibits consistently highest success rates and shortest pushing paths compared to state-of-the-art baselines. Hardware experiments on a 1/10 car-like pusher demonstrate the robustness of our approach. Code and footage from our experiments can be found at: \url{https://fluentrobotics.com/relopushboss}.

\end{abstract}


\section{Introduction}\label{sec:introduction}




Rearrangement planning has revolutionized fulfillment by enabling rapid, large-scale transportation of goods by mobile robots in massive warehouses~\citep{dandrea2012kiva}. Transportation tasks in these domains are often facilitated by extensive workspace and systems engineering: robots often move along rectilinear grids in well-structured spaces; they manipulate a closed set of objects of regular shapes and sizes; they apply secure closure grasps through grippers of high design complexity and cost. This level of structure is not possible in other important domains like construction, waste management, and smaller/medium warehouses due to high cost and clutter density. In those environments, the ability to handle a wide range of objects is essential, and so is the need to address kinematic and geometric constraints. These requirements give rise to complex problem instances that motivate the investigation of new paradigms for rearrangement planning.

A practical strategy for tackling rearrangement tasks in unstructured domains is pushing~\citep{lynch1996stable,mason1986mechanics}: exploiting physics, rather than grasping, pushing allows for the reconfiguration of large, heavy, and irregular objects without requiring specialized grippers; a simple pushing surface is sufficient. However, this paradigm introduces the challenge of integrating physics into rearrangement planning. A practical way of doing so is via constraining interactions between the robot and pushed objects to be quasistatic. This approach further constraints robot motion: quasistatic pushing using a mobile robot reduces to the Dubins car problem~\citep{lynch1996stable}. When the robot is constrained, its maneuverability is also limited even when \emph{not} pushing. In densely cluttered spaces, this combination of geometric, kinematic, and physics constraints renders many practical planning problems infeasible.


\begin{figure}[t]
\begin{subfigure}{0.485\linewidth}
\includegraphics[width=\linewidth]{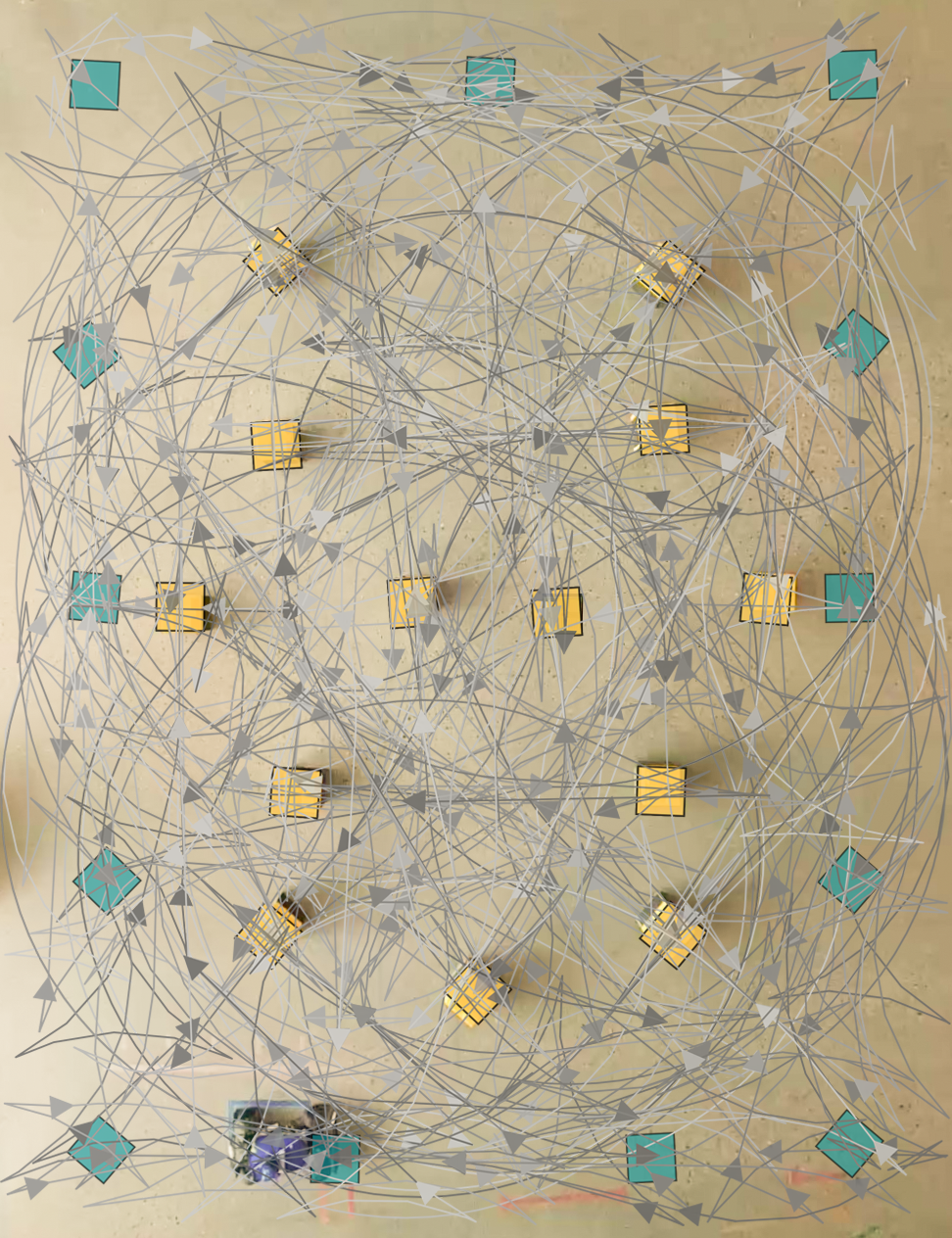}
\caption{Initial (yellow) and goal (teal) object poses, connected through a \emph{Push-Traversability} graph.}
\label{fig:relopush_before}
\end{subfigure}
\begin{subfigure}{0.485\linewidth}
\includegraphics[width=\linewidth]{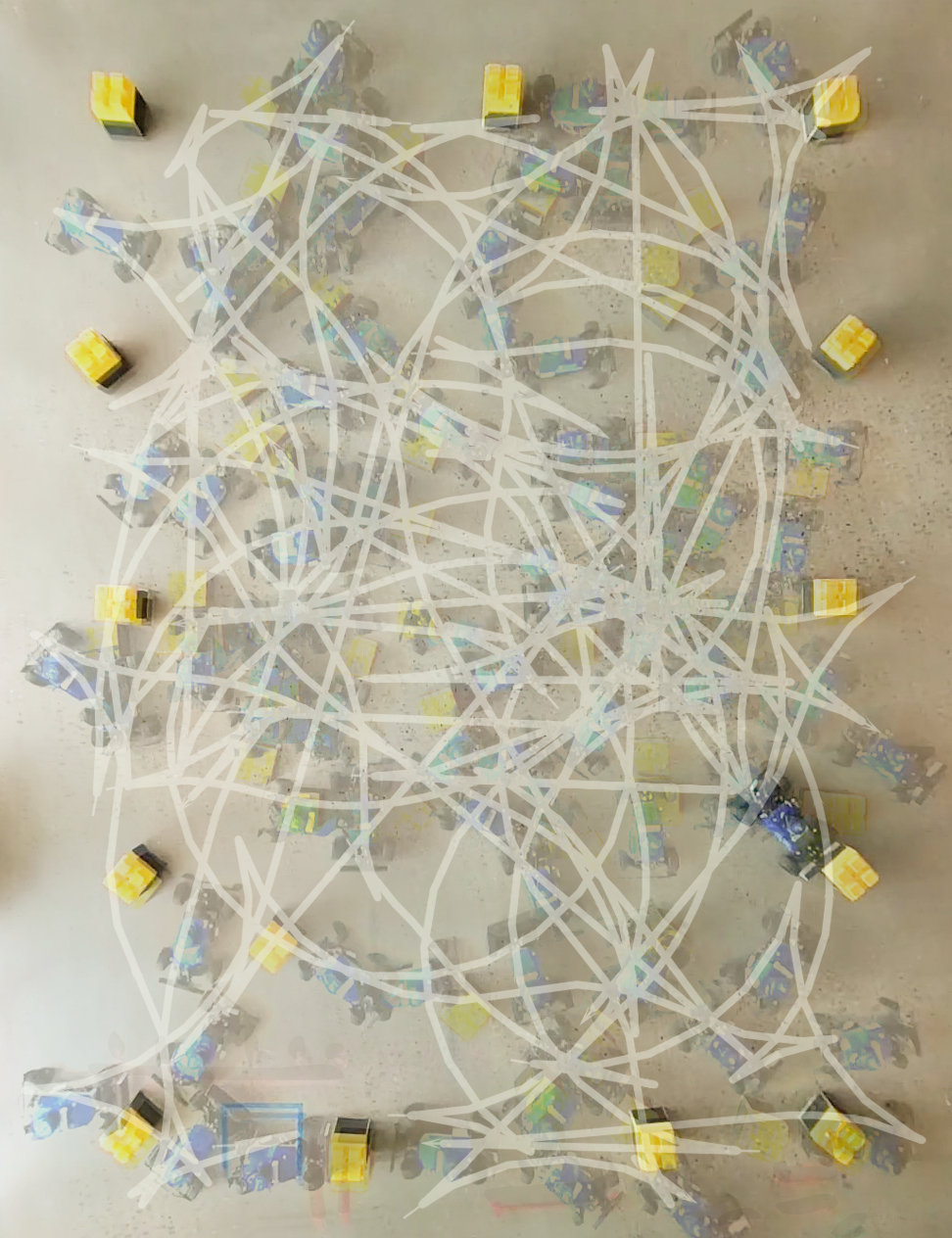}
\caption{Rearrangement path, extracted by planning on the graph, and robot motion traces from an experiment.}
\label{fig:relopush_after}
\end{subfigure}
  \caption{We introduce \textit{ReloPush-BOSS}, an optimization-based planning framework for nonprehensile multiobject rearrangement planning using a car-like robot pusher. By capturing kinematic, geometric, and physics constraints into a unified object traversability graph (\subref{fig:relopush_before}), our framework is capable of handling nonmonotone instances in densely cluttered workspaces with up to 13 objects (\subref{fig:relopush_after}).}
  \label{fig:main}
\end{figure}

In this work, we introduce a planning framework for constrained multiobject rearrangement planning using a car-like robot pusher. Our framework simultaneously addresses geometric (obstacle avoidance, boundary constraints), kinematic (nonholonomic), and physics constraints (quasistatic pushing) to tackle challenging rearrangement tasks (see~\figref{fig:main}). Our approach builds upon our graph-based framework, ReloPush~\citep{ahn2025relopush}, which handles constraints by temporarily prerelocating objects along collision-free Dubins paths leading to object goal poses. In ReloPush, these prerelocations are planned along linear segments along the body frame axes of the pushed object, and thus often lead to high-cost solutions or failures. Here, we introduce \emph{ReloPush-BOSS} (\emph{ReloPush with Backtracking and Optimization with Seeded Start}), a planning framework that combines backtracking and prerelocation optimization informed by Dubins path classification to avoid excessively high-cost local minima. Across a series of simulated and real-world experiments involving the rearrangement of up to 13 objects by a 1/10th-scale robot racecar, we demonstrate the superior scalability and robustness of ReloPush-BOSS against ablated versions as well as a baseline from the literature. 


Our contributions are as follows:
\begin{itemize}
    \item \emph{Optimization-based prerelocation search}: In contrast to our prior work~\citep{ahn2025relopush}, in which prerelocations are sampled from straight-line segments originating from an object's frame, we formulate prerelocation search as a continuous optimization producing lower-cost solutions.
    \item \emph{Avoidance of high-cost local minima}: Exploiting insights from Dubins path classification~\citep{shkel2001classification,dubins1957curves,CHEN2019368,lim2023circling}, we introduce seeded warm-starts in the optimization process to avoid getting trapped into high-cost local minima representing undesired paths.
    \item \emph{Extensive empirical validation}: We validate our framework on challenging constrained planning problems, showing significant performance gains over baselines~\citep{ahn2025relopush,krontiris2015dealing}. We demonstrate the efficacy and robustness of our approach through real-world demonstrations on a 1/10 scale robot racecar~\citep{srinivasa2019mushr}.
\end{itemize}



\begin{table*}[t]
    \setlength{\tabcolsep}{6pt}
    \centering
    \caption{Related works. $^\dagger$ Relocation of obstacles for object retrieval. $^{\dagger\!\dagger}$ Relocation of objects for sorting. $^{\dagger\!\dagger\!\dagger}$ Rearrangement of at most one object, and at most one reconfiguration of the object before transport. $^*$ Orientations are specified for object goal poses. \label{tab:relatedworks}}
    \resizebox{\linewidth}{!}{\begin{tabular}{l c c c c c c c}
        \toprule
        & \multicolumn{2}{c}{Manipulation} & \multicolumn{2}{c}{Hardness} & \multicolumn{3}{c}{Dexterity} \\
        \cmidrule(lr){2-3}\cmidrule(lr){4-5}\cmidrule(lr){6-8}
        & Prehensile & Nonprehensile & Nonmonotone & Orientation Requirement$^*$ & Overhand Grasp & Nonholonomic & DoF Assumed \\
        \midrule
        \citet{stilman2007manipulation}         & \checkmark &             &             & \checkmark     &             &             & High-DoF arm \\
        \citet{dogar2012planning}$^\dagger$     & \checkmark & \checkmark  &             & \checkmark     &             &             & High-DoF arm \\
        \citet{nam2021fast}$^{\dagger}$         & \checkmark &             & \checkmark  &          &             &             & High-DoF arm \\
        \citet{ahn2021integrated}$^\dagger$     & \checkmark &             & \checkmark  &          &             &             & High-DoF arm \\
        \citet{ren2024multi}$^{\dagger}$        & \checkmark &             & \checkmark  &          &             &             & High-DoF arm \\
        \citet{lee2021tree}                     & \checkmark & \checkmark  & \checkmark  &          &             &             & High-DoF arm \\
        \citet{King-RSS-13}$^{\dagger\!\dagger\!\dagger}$ & \checkmark & \checkmark & \checkmark  &  \checkmark      &             &             & High-DoF arm \\
        \citet{krontiris2015dealing}            & \checkmark &             & \checkmark  & \checkmark &             &             & High-DoF arm \\
        \citet{shome2018fast}                   & \checkmark &             &             & \checkmark & \checkmark   &             & High-DoF arm \\
        \citet{ahn2023coordination}$^{\dagger\!\dagger}$ & \checkmark &             & \checkmark  &     & \checkmark   &             & High-DoF arm \\
        \citet{han2018complexity}               & \checkmark &             & \checkmark  & \checkmark     & \checkmark   &             & High-DoF arm \\
        \citet{gao2023effectively}              & \checkmark &             & \checkmark  & \checkmark     & \checkmark   &             & High-DoF arm \\
        \citet{song2019object}                  & \checkmark & \checkmark  & \checkmark  & \checkmark     & \checkmark   &             & High-DoF arm \\
        \citet{tang2023selective}               & \checkmark & \checkmark  & \checkmark  & \checkmark     & \checkmark   &             & High-DoF arm \\
        \citet{huang2019large}$^{\dagger\!\dagger\!\dagger}$ &          & \checkmark  & \checkmark  & \checkmark     &             &             & High-DoF arm \\
        \citet{talia2023pushr}                  &           & \checkmark  &             & \checkmark &             & \checkmark  & Low-DoF (car) \\
        Our prior work, ReloPush~\citep{ahn2025relopush}                 &           & \checkmark  & \checkmark  & \checkmark &             & \checkmark  & Low-DoF (car) \\
        \textbf{This work (ReloPush-BOSS)}                              &           & \checkmark  & \checkmark  & \checkmark &             & \checkmark  & Low-DoF (car) \\
        \bottomrule
    \end{tabular}}
\end{table*}

\section{Related Work}\label{sec:related-work}

There is a long history of work on rearrangement planning. We organize related work in terms of three key categories (see~\tabref{tab:relatedworks}): manipulation strategy; system dexterity; rearrangement hardness. We elaborate on each of them below.



\textbf{Manipulation strategy}. Much of the work on rearrangement planning assumes \emph{prehensile} manipulation via grasping~\citep{stilman2007manipulation,han2018complexity,krontiris2015dealing,gao2023effectively,lee2021tree}, including suction-based pipelines~\citep{han2018complexity}, pick-and-place sequences~\citep{gao2023effectively,lee2021tree}, and grasp-based obstacle relocation for target access~\citep{stilman2007manipulation}. Recent prehensile methods address such sequential challenges via multi-objective planning with selective obstacle relocation to feasible points~\citep{tuncer2025mo} or decomposition into easier pick-and-place subproblems~\citep{levit2024solving}. 
In contrast, \emph{nonprehensile} manipulation~\citep{lynch1996stable,mason1986mechanics,goyal1991planar} leverages physics to handle diverse object sizes/shapes without specialized grippers, making it robust for unstructured domains like cluttered warehouses~\citep{huang2019large,talia2023pushr,bertoncelli2020linear,tang2023unwieldy,song2019object}. For instance, \citet{huang2019large} employ policy rollouts and iterated local search to tackle large-scale planar rearrangement problems using a manipulator that pushes, and \citet{song2020multi} propose a nonprehensile, push-based rearrangement method for object sorting. Some work~\citep{dogar2012planning,song2019object} combines pushing with picking to enable grasping in dense clutter under occlusions, whereas recent work develops policies for robust pushing of diverse objects~\citep{bertoncelli2020linear,tang2023unwieldy}.

\textbf{Dexterity}. For pick-and-place or sorting tasks, high–DoF manipulators are often employed thanks to their increased dexterity which enables complex rearrangements in clutter. In open-top scenes, arms exploit \emph{overhand} access to simplify collision avoidance and sequencing \citep{han2018complexity,gao2023effectively,song2019object,lee2021tree}. High-DoF manipulators are also effective in closed-top settings (shelves/racks/bins) by exploiting side-access grasps to complete challenging rearrangements \citep{nam2021fast,ahn2021integrated,ren2024multi}. However, for large planar workspaces, mobility is prioritized over dexterity. Holonomic bases allow unconstrained planar motion and simplify planning, but many common platforms are nonholonomically constrained (e.g., differential drive, or car-like robots), thus affording limited maneuverability in constrained spaces~\citep{lavalle2006planning,dubins1957curves,reeds1990optimal}.
\citet{ma2019lifelong} incorporate kinematic constraints in a pickup–and–delivery problem under grid-world assumptions whereas \citet{bertoncelli2020linear} address more realistic settings in pushing, and \citet{talia2023pushr} propose a framework for multirobot planar pushing using car-like robots. Closest to our settings, \citet{ahn2025relopush} tackle multiobject rearrangement planning using a car-like robot, but their method is prone to high-cost local minima.


\textbf{Hardness}. Earlier work focused on \emph{monotone} rearrangement tasks where each object is assumed to be moved at most once. The work of~\citet{stilman2007manipulation} handles manipulations of obstacles assuming monotone transitions. \citet{dogar2012planning} present a planning framework for object retrieval formulated under \emph{monotone} assumptions using a manipulator. Similarly, \citet{ahn2022coordination} propose a coordination method of two manipulators in object retrieval, making problems monotone by introducing additional space for removing obstacles from a clutter. \citet{talia2023pushr} address object rearrangement problems using nonholonomic robots, restricted to monotone cases. In contrast, many real-world domains like logistics, construction, and the home often require the rearrangement of multiple objects in dense clutter, thus giving rise to \emph{nonmonotone} problem instances where multiple manipulation actions are required for rearranging any object. These are substantially more complex due to the combinatorial explosion in the search space and interdependent object movements~\citep{han2018complexity,krontiris2015dealing,pan2022algorithms}. This complexity escalates further when constrained by pushing actions and nonholonomic robot kinematics \citep{song2020multi}, and prior work tackles simplified problem instances like the rearrangement of a single object in an obstacle-free workspace~\citep{bertoncelli2020linear,tang2023unwieldy}.

\textbf{This work}. We address labeled rearrangement tasks under nonprehensile, nonholonomic, and geometric constraints, which often result in nonmonotone instances. These instances are substantially more constrained compared to much of the literature, which tends to assume secure, prehensile manipulation (i.e., closure grasps), unconstrained robot motion (e.g., holonomic or overhand grasping), or position-only rearrangement planning. In prior work~\citep{ahn2025relopush}, we introduced \emph{prerelocations} as temporary object displacements that unlock kinematically feasible object transitions along contact normals (pushing axes). This is constraining, as it limits the search space and often excludes more efficient rearrangement routes that could be unlocked by small displacements in other directions. 
Our new method, ReloPush-BOSS addresses this limitation by expanding the search for prerelocations beyond pushing directions, incorporating an optimization strategy that exploits insights from Dubins path classification~\citep{dubins1957curves,CHEN2019368,lim2023circling,shkel2001classification} to identify superior candidates while avoiding getting trapped in high-cost minima.

\section{Problem Statement}\label{sec:statement}

We consider the problem of rearranging a set of $m$ rigid, polygonal objects using a car-like robot pusher in a workspace \(\mathcal{W} \subset SE(2)\). Without loss of generality, we assume that all objects are $K$-polygons. The robot can push objects through a flat bumper attached at its front. We represent the state of the robot as \(p \in \mathcal{W}\), and the state of each object \(j\in \mathcal{M}=\{1,\dots, m\}\) as \(o_j \in \mathcal{W}\). The robot has rear-axle, simple-car kinematics of the form \(\dot{p} = f(p, u)\), where \(u\) is a control input (speed and steering angle) from a space of controls $\mathcal{U}$. The objective of the robot is to rearrange the objects from their initial poses ${O^s = \left(o_1^s, o_2^s, \dots, o_m^s\right)}$
to their goal poses ${O^g = \left(o_1^g, o_2^g, \dots, o_m^g\right)}$. We seek to develop a planning framework to enable the robot to efficiently rearrange all objects to their goals. We assume the robot has accurate knowledge of: its ego pose; the starting configuration of all objects, $O^s$; the bumper-object friction coefficient. 

\section{ReloPush-BOSS: Optimization-guided Nonmonotone Rearrangement Planning}\label{sec:approach}

We give an overview of our proposed architecture \emph{ReloPush-BOSS}, and then elaborate on its core components.

\subsection{Rearrangement Sequence Planning}\label{sec:backtracking}


An overview of ReloPush-BOSS is illustrated in~\figref{fig:architecture}. At the high level, ReloPush-BOSS involves a sequence planner, implemented as a depth-first search (DFS) over a sequence of rearranged objects. This search starts from depth $d0$, representing the initial configuration of the world, i.e., objects lying at $O^s$, robot lying at $p^s$. A push-traversability (PT)-graph is first constructed, describing how objects in $\mathcal{W}$ can move via stable pushing (see~\figref{fig:ptgraph}). For each object to be rearranged from $o^s_j$ to $o^g_j$, a graph search is performed, resulting in a corresponding rearrangement path. These paths are then ordered wrt the distance required to push-transfer them to their goal pose, and stored in an ordered candidate queue (shorter paths are better). We check if the rearrangement of shortest path is reachable by the robot from its current pose. If reachable, the rearrangement is feasible. Thus, the rearrangement is reflected in an updated world configuration $d1$. If not reachable, the rearrangement is infeasible. In that case, the planner backtracks and tries the next best option in the queue. This whole process is repeated until all objects reach their goals or the candidate queue is exhausted, representing a failure. This cost-ordered depth-first search with backtracking represents an improvement over ReloPush~\citep{ahn2025relopush}, which marks failure when encountering an infeasible rearrangement. While the greedy selection does not guarantee a globally optimal sequence, we employ the strategy to avoid the combinatorial explosion associated with exhaustive sequence planning for global optimum.

\begin{figure}
\vspace{3pt}
    \centering
    \includegraphics[width=\linewidth]{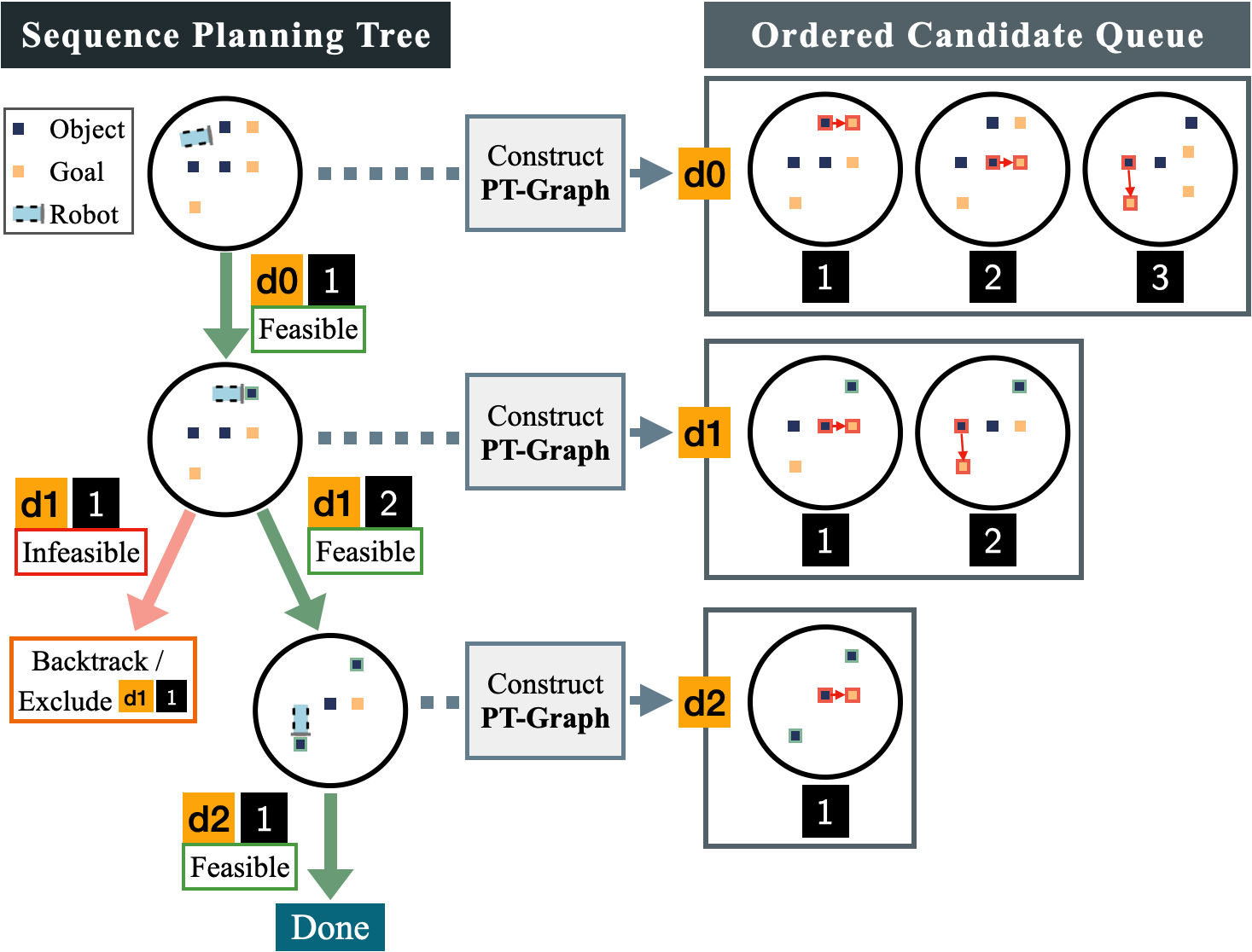}
    \caption{ReloPush-BOSS embeds a PT-graph in a depth-first high-level planner. Starting from the initial configuration (d0), it builds a PT-graph at each depth, enumerates candidate rearrangements, and orders them by cost in a priority queue (ordered candidate queue). Candidates are tested in order; infeasible ones (e.g., inaccessible objects) are pruned and the planner backtracks until it finds a feasible sequence or exhausts all options.}
    \label{fig:architecture}
\end{figure}

\begin{figure}[t]
\begin{subfigure}{0.325\linewidth} 
\includegraphics[width=\linewidth]{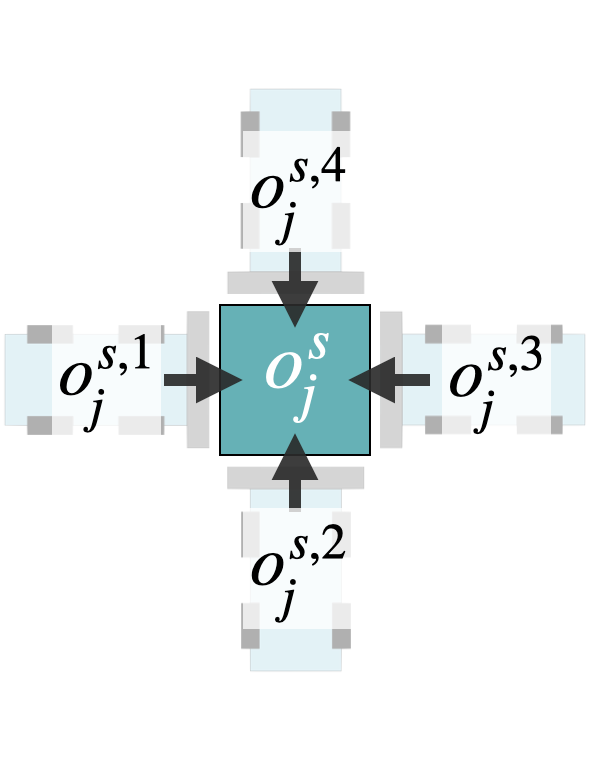}
\caption{Pushing Poses.}
\label{fig:pushing_poses}
\end{subfigure}
\begin{subfigure}{0.325\linewidth} 
\includegraphics[width=\linewidth]{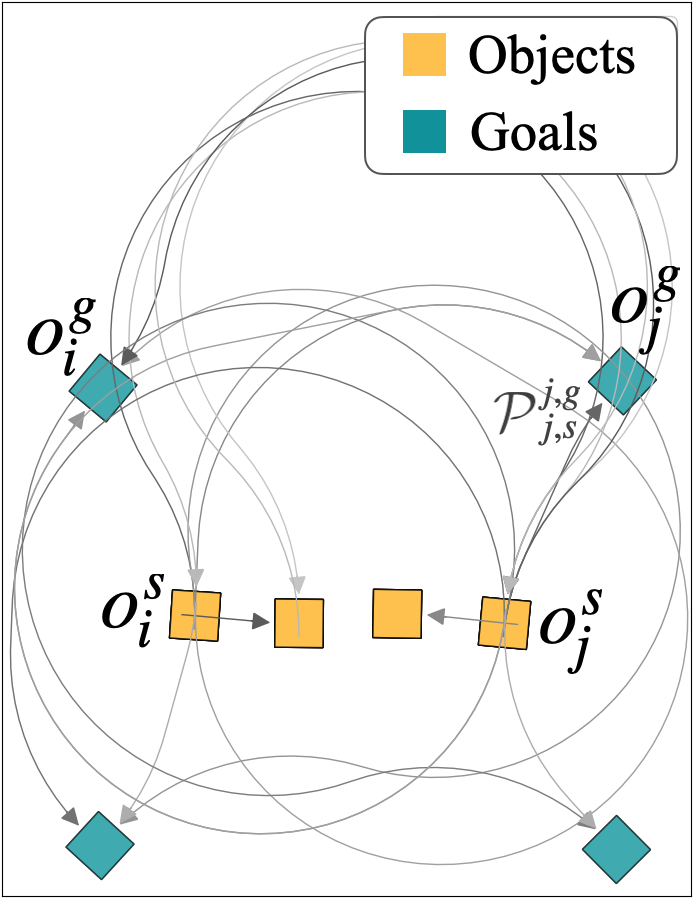}
\caption{Traversability.}
\label{fig:ptgraph_path}
\end{subfigure}
\begin{subfigure}{0.325\linewidth}
\includegraphics[width=\linewidth]{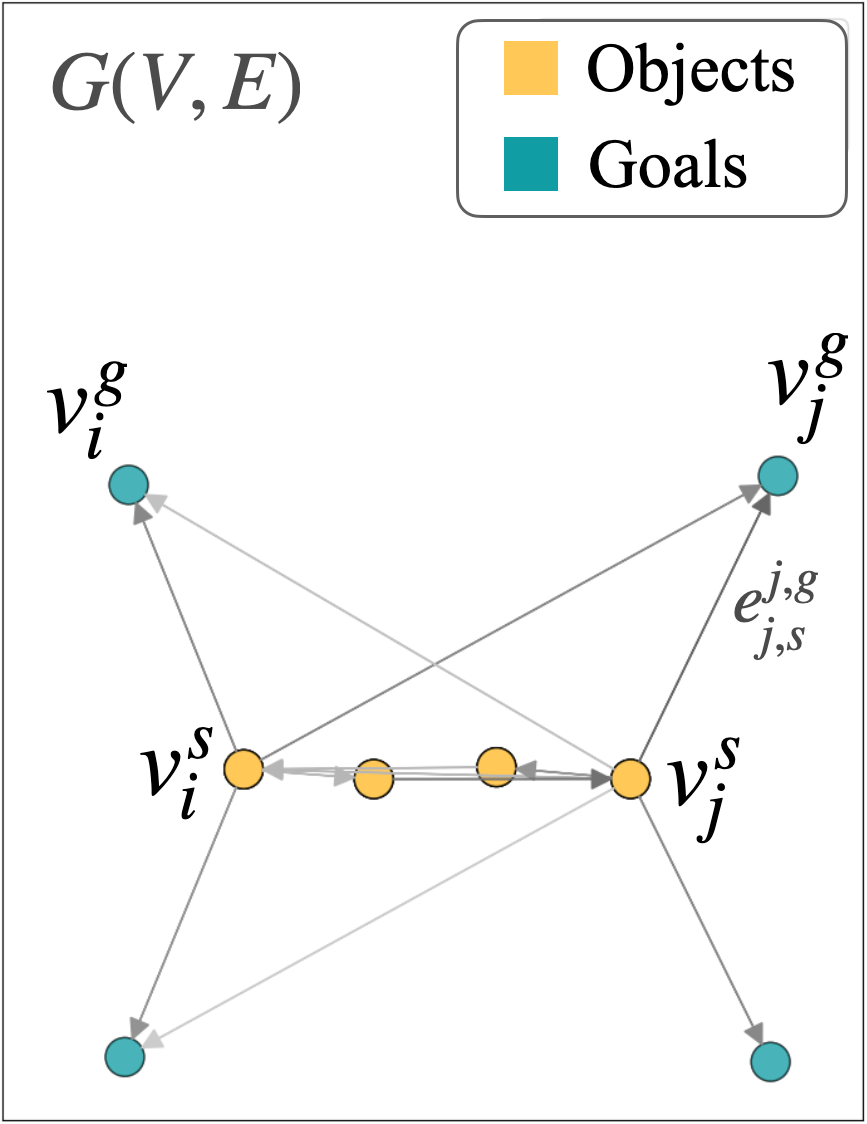}
\caption{Traversability Graph.}
\label{fig:ptgraph_}
\end{subfigure}
    \caption{Push-Traversability graph (PT-graph). Vertices denote pushing poses for objects/goals; directed edges encode feasible pushes weighted by total pushing length. (a) Pushing poses are defined as contact normals on object's sides. (b) Workspace view showing traversable connectivity between objects/goals. (c) Corresponding graph; for clarity, only one vertex per object is shown.}

  \label{fig:ptgraph}
\end{figure}





\subsection{Push-Traversability Graph}

\textbf{Construction}. Object $j$, lying at a pose $o_j$ admits a finite set of \emph{pushing poses} $F(o_j) = \{p_j^1,\dots,p_j^k\}$, defined by stable contact points on the object's boundary and the surface normal in which stable pushing is possible (i.e. K=4 for rectangular object. see~\figref{fig:pushing_poses}). As shown in~\figref{fig:ptgraph}, the PT-graph $G=(V,E)$ comprises a set of vertices $V=V^s\cup V^g$, where $V^s$ represents the set of all pushing poses for all objects when lying at their initial poses $O^s$, and $V^g$ represents the set of all pushing poses for all objects when lying at their goal poses $O^g$. For any pair of vertices $v_{\mathrm{src}}\in V^s$, $v_{\mathrm{dst}}\in V^s\cup V^g$, a \emph{directed} edge $e=(v_{\mathrm{src}},v_{\mathrm{dst}})\in E$ is added iff there exists a collision-free Dubins path~\citep{dubins1957curves} from the pushing pose corresponding to $v_{\mathrm{src}}$ to the pushing pose corresponding to $v_{\mathrm{dst}}$. To ensure stable pushing during push-transfer between two pushing poses, we set the turning radius of the robot $\rho$ to the quasistatic steering limit $\rho_p\leq \rho_{max}$. In a densely cluttered space, a Dubins path under this limit is often not be feasible due to boundary violations or collisions with objects (see~\figref{fig:prereloa}). In such cases, we search for a prerelocation, i.e., an intermediate object pose $o^{pre}_j \in \mathcal{W}$, splitting the rearrangement of object $j$ from $o_j^s$ to $o^g_j$ into a sequence of two collision-free Dubins paths: one from $o_j^{s}$ to $o_j^{pre}$ and another from $o_j^{pre}$ to $o_j^{g}$ (see~\figref{fig:prerelob}). If found, we embed the paths associated with the $o_j^{pre}$ to an edge to connect the two vertices. While in our prior work~\citep{ahn2025relopush}, this search was performed via sampling along the directions of the object's pushing poses, in this paper we describe an optimization-based approach that searches over a continuous region for a prerelocation that minimizes the push-transfer length (see~Sec.~\ref{sec:optimization}, Sec.~\ref{sec:localminima}). For a finite number of vertices, the PT-graph is constructed in polynomial time~\citep{ahn2025relopush}.

\textbf{Search}. 
 For a pair of vertices $v_j^{s,k}$ to $v_j^{g,l}$, describing the rearrangement of object $j$ from $o_j^{s}$ to $o_j^{g}$ via a path from pushing pose $p_j^{s,k}$ to pushing pose $p_j^{g,l}$, a push-transfer path can be found through standard graph search techniques. Note that the initial pushing pose, $l$, can be different than the final one, $k$, since our graph includes prerelocations, i.e., breaks of contact enabling transitions to different pushing poses. If the path contains a vertex other than $v_j^{s,k}$, $v_j^{g,l}$, that vertex represents an object $b\neq j$ at pose $o_b$, blocking the robot's path from $p_j^{s,k}$ to $p_j^{g,l}$. In that case, a straight-line push-transfer path is found to displace object $b$ from $o_{b}$ to make the rearrangement of object $j$ feasible. That path is found by sampling points along the contact normals of $o_{b}$ and finding the closest one that is not in collision with the rearrangement path of object $j$, and with any other objects. This action, referred to as \textit{Obstacle Relocation}, acts as a primitive clearing heuristic; we strictly limit this to straight-line pushes to ensure the inner loop of the search remains efficient, rather than expanding the search space to find optimal poses for secondary objects. Relocation of the blocking object $b$ will be executed before transferring object $j$, which will now pass unobstructed through the region that was previously occupied by object $b$. 

\begin{figure}[h!]
\begin{subfigure}{0.49\linewidth}
\includegraphics[width=\linewidth]{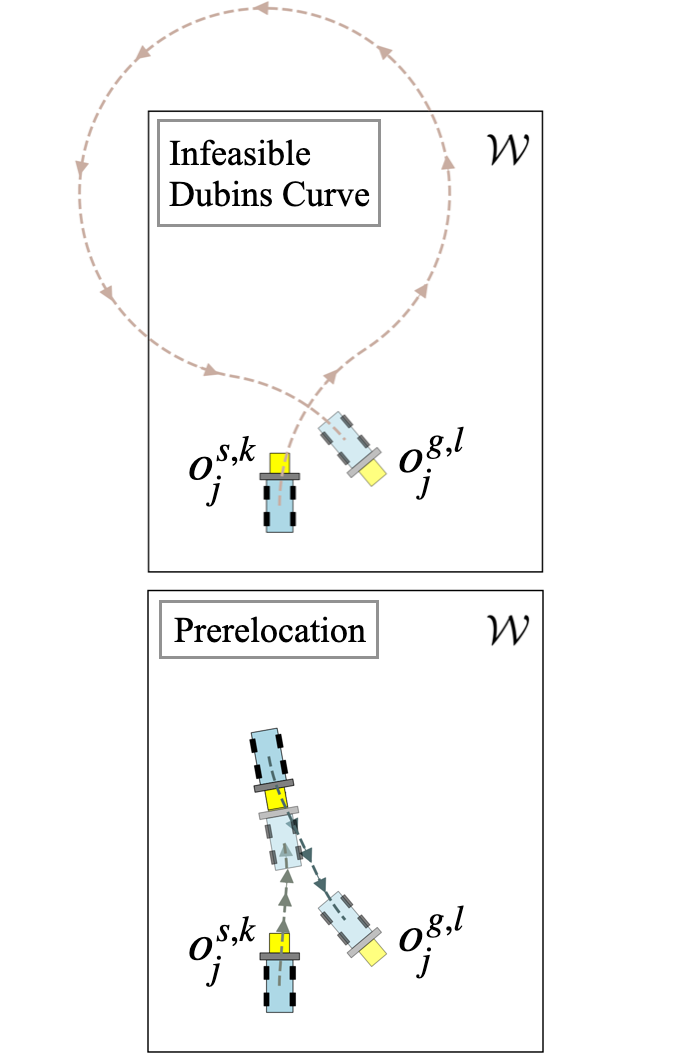}
\caption{}
\label{fig:prereloa}
\end{subfigure}
\begin{subfigure}{0.49\linewidth}
\includegraphics[width=\linewidth]{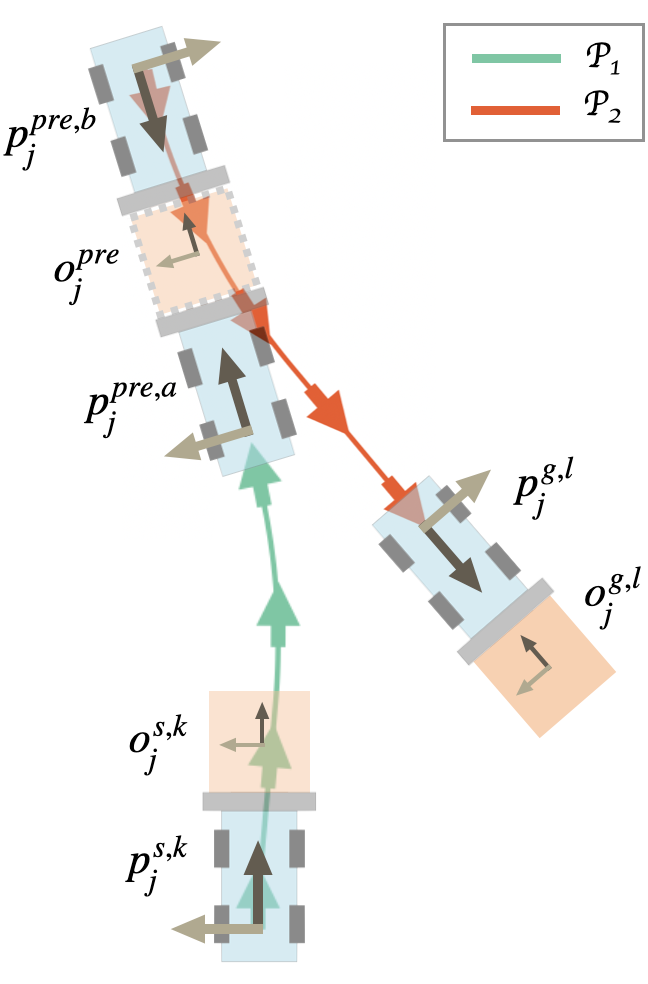}
\caption{}
\label{fig:prerelob}
\end{subfigure}
    \caption{Illustration of a prerelocation. (\subref{fig:prereloa}) Top: A continuous push-transfer Dubins path from $o^{s,k}_j$ to $o^{g,l}_j$ is infeasible; bottom: transitioning to a new pushing pose when the object is transferred to a prerelocation enables a feasible push transfer. (\subref{fig:prerelob}) Zoom in on a prerelocation. A robot path $\mathcal{P}_1$, from $p_j^{s,k}$ to $p_j^{pre,a}$ (shown in green) transfers the object from $o_j^{s,k}$ to $o_j^{pre}$. After the robot transits to a new pushing pose $p_j^{pre,b}$ (transit path not shown), $\mathcal{P}_2$ transfers the object to $o_j^{g,l}$.}
  \label{fig:prerelo}
\end{figure}

\subsection{Optimizing Prerelocations}\label{sec:optimization}

ReloPush-BOSS optimizes a prerelocation wrt the total length of the push-transfer portion of the rearrangement given a start pose, a goal pose, and a minimum turning radius; transit paths to and from objects are not accounted for. For object $j$ moved from $o_j^{s}$ to $o_j^{g}$ through a transition from pushing pose $p_j^{s,k}$ to pushing pose $p_j^{g,l}$, an optimized prerelocation $o_j^{\mathrm{pre}}$ is obtained by solving the following optimization (see~\figref{fig:prerelob}):

\begin{align}C(o_j^{pre}) = \arg &\min \mathcal{L}(\mathcal{P}_1) + \mathcal{L}(\mathcal{P}_2)\label{eq:argmin}\\ 
s.t. \: \mathcal{P}_1 &= Dubins(p^{s,k}_j,p^{pre,a}_j)\label{eq:dubins1}\\
\mathcal{P}_2 &= Dubins(p^{pre,b}_j,p^{g,l}_j)\label{eq:dubins2}\\
\mathcal{P}_1 &, \mathcal{P}_2 \in\mathcal{W}\label{eq:workspace}\\
\mathcal{P}_1 &, \mathcal{P}_2 \cap O =\emptyset \mbox{,}\label{eq:objects}
\end{align}
where $C(\cdot)$, $\mathcal{L}(\cdot)$, $Dubins(\cdot,\cdot)$ are the cost of a \emph{prerelocation}, the length of a path, and a function describing a Dubins path between the poses passed as its arguments, respectively. $p_j^{pre,a}$ is the pose of the robot when arriving at $o_j^{pre}$, $p_j^{pre,b}$ is the pose of the robot when initiating transfer to $o_j^{g}$. Eqs \eqref{eq:workspace} and \eqref{eq:objects} enforce respectively the constraints that these segments are within the workspace boundary and not in collision with any objects. In this work, we focus on analyzing the finding of at most one prerelocation per object as optimization for sequential prerelocations would transform the problem into a different optimization problem. Our analysis confirms that a single prerelocation shows considerable effectiveness in finding feasible solutions that pose lower costs than those of the prior work.

\subsection{Avoiding High-Cost Local Minima}\label{sec:localminima}



We show that a gradient-based optimization over the cost of eq.~\eqref{eq:argmin} is prone to local minima, and propose a technique for warm-starting an optimizer from a high-quality solution. 



\textbf{Dubins path classification}. A Dubins path from a pose $p^s$ to a pose $p^g$ comprises exactly three segments and is either of type \textit{CSC} or \textit{CCC}~\citep{shkel2001classification}, where \textit{S} represents a straight line segment, and $C\in\{L, R\}$ represents an arc segment under turning radius $\rho$, where \textit{L} is a left turn and \textit{R} is a right turn. 
As shown in~\figref{fig:dubins}, when the required heading change between $p^s$ and $p^g$ is too large compared to their respective position distance, $d$, only \textit{CCC} paths are admissible; these tend to be long as they feature three consecutive turns~\citep{shkel2001classification}. 


\begin{figure}[t]
\centering
\begin{subfigure}{0.43\linewidth}
\includegraphics[width=\linewidth]{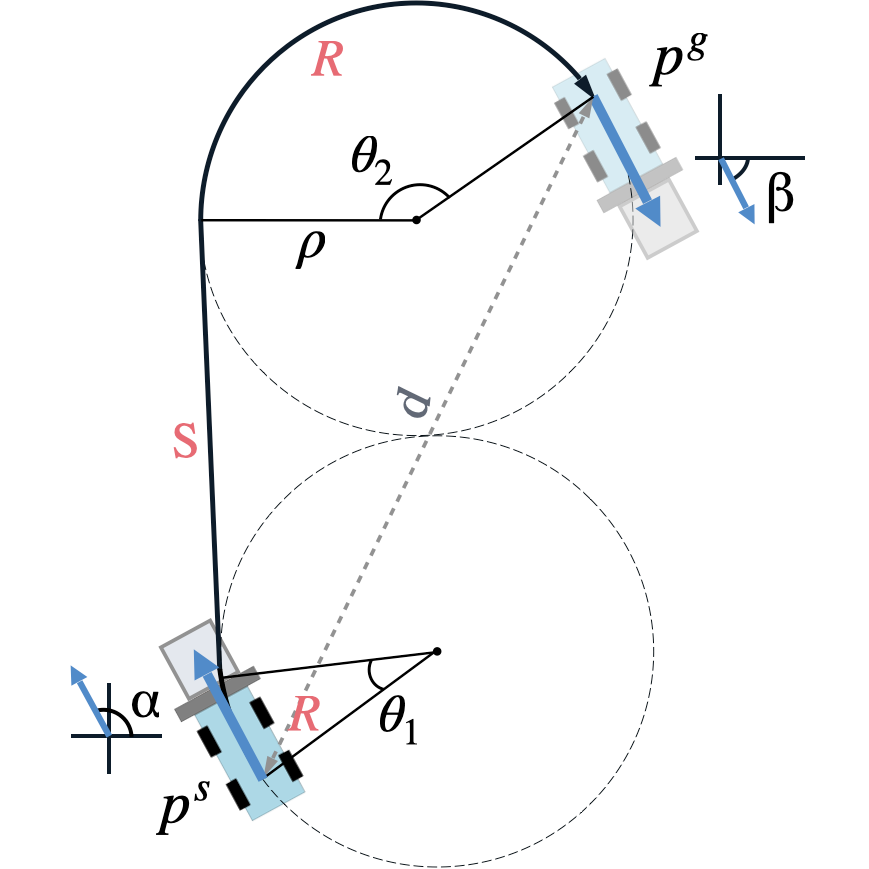}
\caption{CSC Dubins path.}
\label{fig:dubins_a}
\end{subfigure}
\begin{subfigure}{0.43\linewidth}
\includegraphics[width=\linewidth]{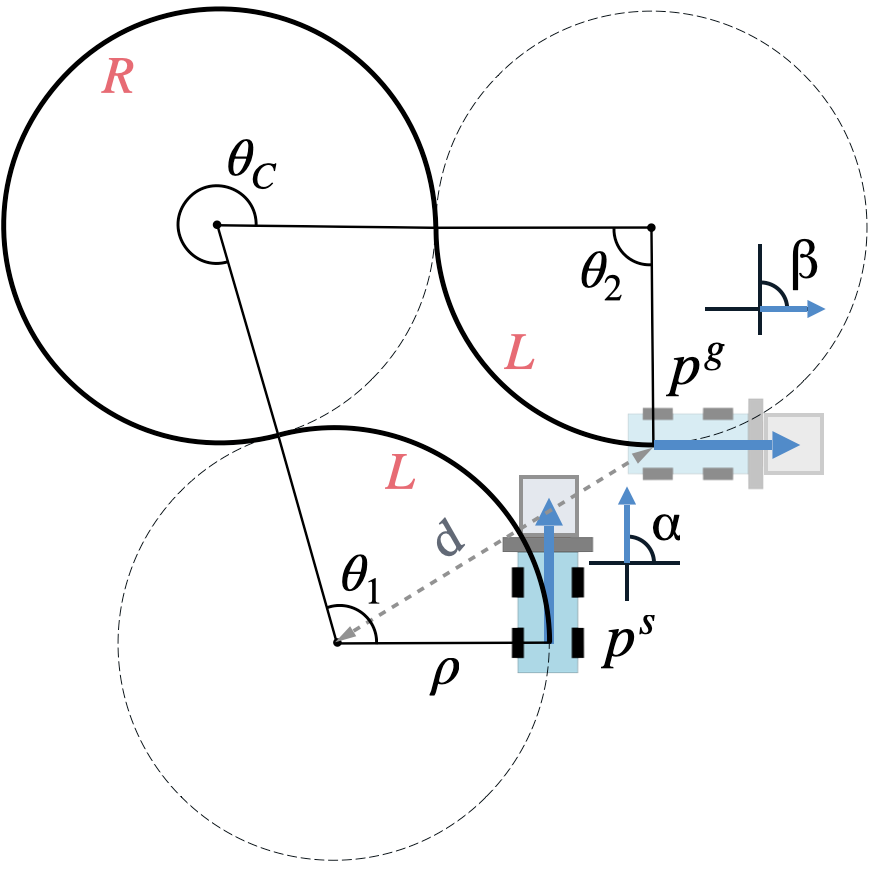}
\caption{CCC Dubins path.}
\label{fig:dubins_b}
\end{subfigure}
    \caption{For a car-like robot, a Dubins path describes the optimal path of transitioning from a start pose $p^s$ to a goal pose $p^g$. A Dubins path can be of two different type families, i.e., $CSC$ or $CCC$, where $S$ is a a straight line segment, and $C\in \{L, R\}$ denotes a left or right arc. When the distance $d$ is too small relative to the respective heading difference, only $CCC$ paths are admissible (the short-distance regime), yielding longer paths with a wider spatial sweep \citep{shkel2001classification}. Circles indicate minimum–turning-radius arcs; $\alpha$ and $\beta$ are the start and goal headings.
}
  \label{fig:dubins}
\end{figure}

\textbf{Non Convexity of $n$-point Dubins path}. \citet{goaoc2013bounded} study the problem of computing the shortest path of bounded curvature visiting a predefined sequence of $n$ points in the plane. They show that the length of this path as a function of the local curve orientation at these points is locally convex, but globally non-convex, indicating the existence of multiple local minima. 
The prerelocation optimization problem of eq.~\eqref{eq:argmin} represents a more general version, where the prerelocation is optimized with respect to the resulting total curve length, $\mathcal{L}(\mathcal{P}_1)+\mathcal{L}(\mathcal{P}_2)$, over both its \emph{position} and orientation. Thus, it is also non-convex.

\begin{figure}[t]

  \begin{subfigure}{0.495\linewidth}
    \includegraphics[width=\linewidth]{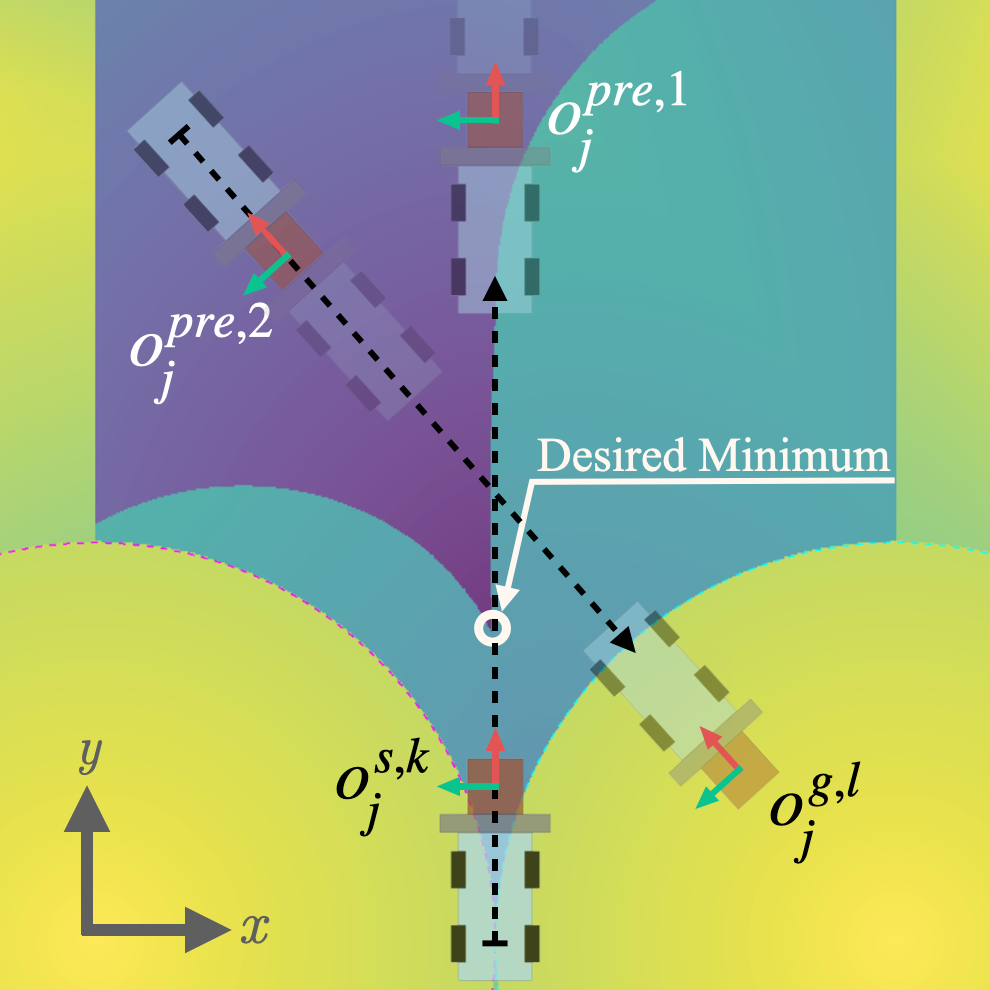}
    \caption{Prerelocation Optimization.}
    \label{fig:combined_cost1}
  \end{subfigure}
  \hfill
  \begin{subfigure}{0.495\linewidth}
    \includegraphics[width=\linewidth]{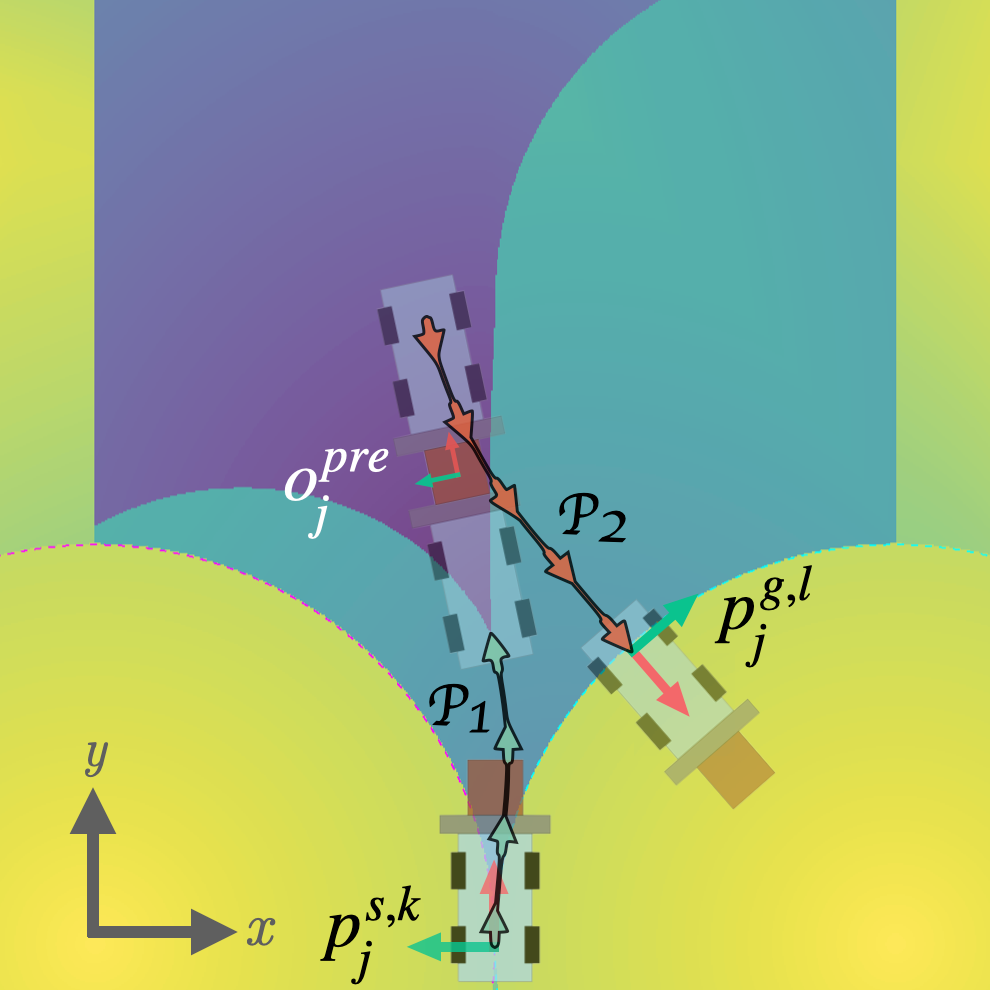}
    \caption{Optimized solution.}
    \label{fig:combined_cost2}
  \end{subfigure}

  \caption{Cost landscape of $\mathcal{L}(\mathcal{P}_1)+\mathcal{L}(\mathcal{P}_2)$ plotted over $\mathbb{R}^2$. Darker colors represent lower costs. For visualization purposes, the plot assumes a fixed heading at each position but the proposed optimization minimizes over $SE(2)$. (\subref{fig:combined_cost1}) $o^{pre,1}_j$ and $o^{pre,2}_j$ are two possible initializations following the approximation discussed in~Sec.~\ref{sec:localminima}. The former is constructed through a sequence of $CS$ and $S$ curves, whereas the latter through a sequence of $S$ and $CS$ curves. Descending from either seed converges to a minimum of $\mathcal{L}(\mathcal{P}_1)+\mathcal{L}(\mathcal{P}_2)$. (\subref{fig:combined_cost2}). Visualization of the optimized solution. 
  }
  \label{fig:three-subfig}
\end{figure}

\begin{figure}[t]

  \begin{subfigure}{0.495\linewidth}
    \includegraphics[width=\linewidth]{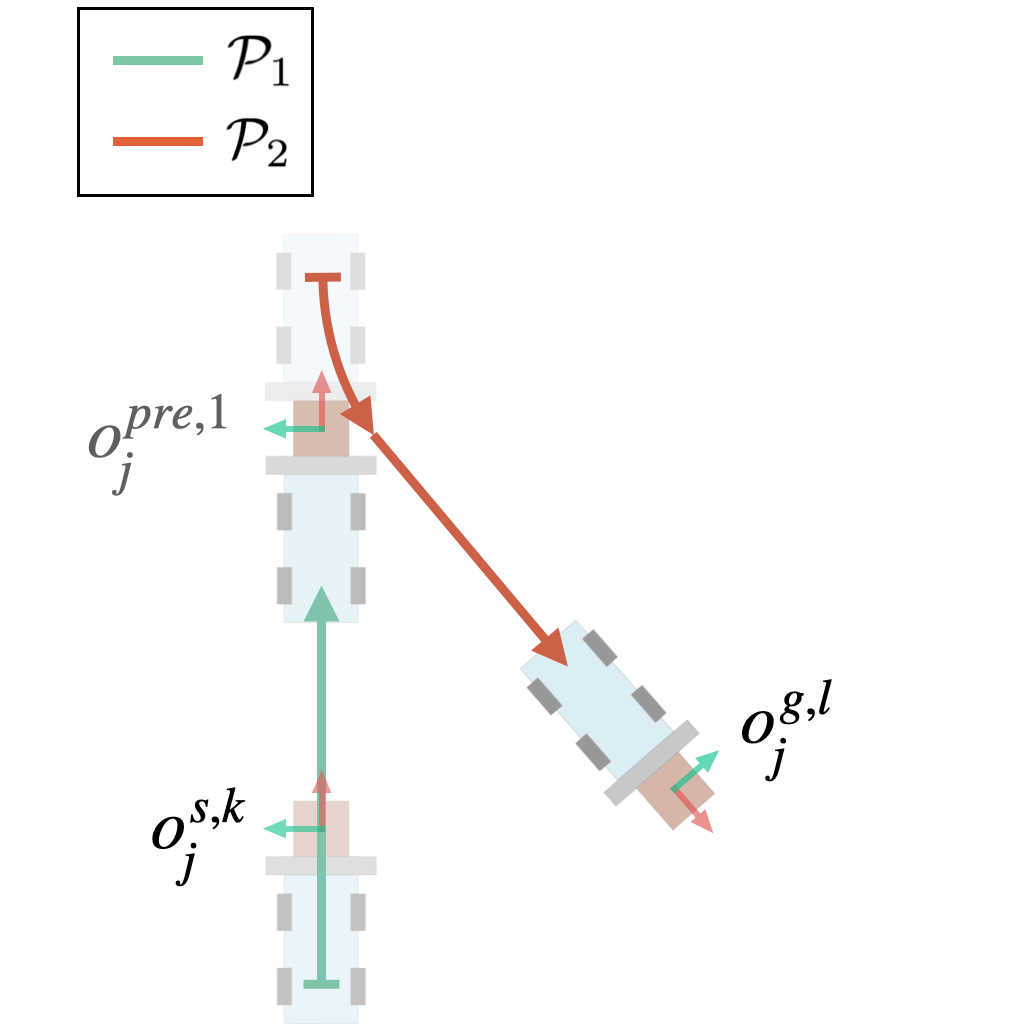}
    \caption{Initial guess.}
    \label{fig:rotate_fig1}
  \end{subfigure}
  \hfill
  \begin{subfigure}{0.495\linewidth}
    \includegraphics[width=\linewidth]{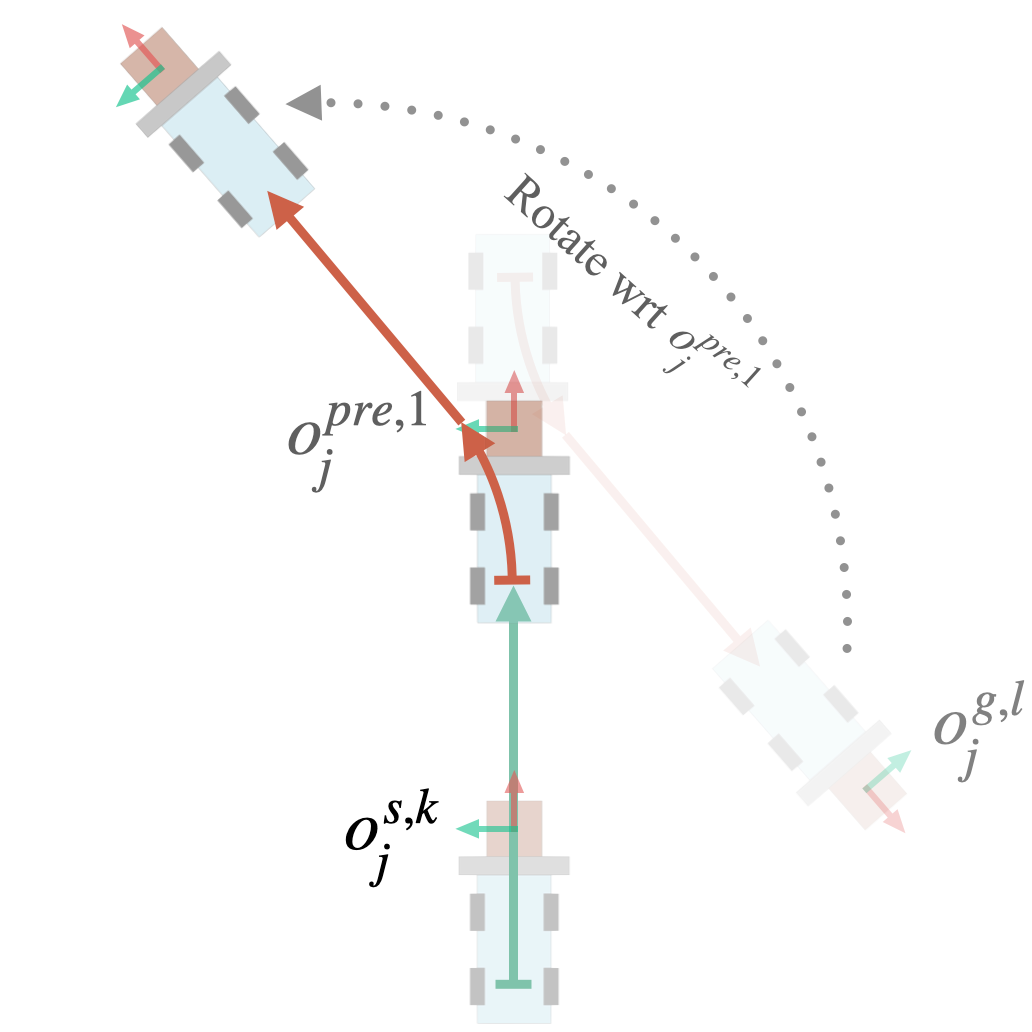}
    \caption{Conversion to 3PDP~\citep{CHEN2019368}.}
    \label{fig:rotate_fig2}
  \end{subfigure}

  \caption{Initial guess as a sequence of $\mathcal{P}_1$ ($S$) and $\mathcal{P}_2$ ($CS$) curves. By rotating $\mathcal{P}_2$ and $p^{g,l}$  wrt $o_j^{pre}$ such that the orientation of $p_j^{pre,a}$ matches the orientation of the rotated $p_j^{pre,b}$, we convert our problem to the 3PDP~\citep{sadeghi2016efficient,CHEN2019368}, enabling us to leverage a high-quality approximation of the optimum.}
  \label{fig:rotate_fig}
\end{figure}
\textbf{Warm-starting the optimization}. For the $n$-point Dubins path problem, considering fixed positions at a $d>4\rho$ distance apart and varying orientations at the intermediate points, \citet{goaoc2013bounded} show that an approximation of the shortest path (at most 1.91 times of the optimum) can be found by connecting consecutive waypoints with $S$ and $CS$ curves. Since prerelocation optimizes a full pose rather than an orientation, it induces a higher freedom for path optimization, offering a potential to reduce the 1.91 factor. Nevertheless, we cannot directly leverage this approximation because unlike the original $n$-point Dubins path problem, a prerelocation, by definition, results in a break of contact corresponding to a change in the pushing pose. 

We convert our \emph{prerelocation} optimization to a 3-point Dubins path problem (3PDP~\citep{CHEN2019368}) to leverage this result in the construction of an optimization seed. First, we find the object pose $o_j^{pre,1}$ enabling a straight-line path $\mathcal{P}_2$ ($S$) from $p_j^{pre,b}$ to $p_j^{g,l}$ and a ($SC$) curve $\mathcal{P}_1$ from $p_j^{s,k}$ to $p_j^{pre,a}$. By symmetry, we could also define a similar initial guess comprising a $S$ segment for $\mathcal{P}_1$, and a $SC$ segment for $\mathcal{P}_1$. We show both solutions in~\figref{fig:combined_cost1}. Next, we rotate $\mathcal{P}_2$ and $p_j^{g,l}$ wrt $o_j^{pre}$ such that the orientation of $p_j^{pre,a}$ matches the orientation of the rotated $p_j^{pre,b}$ (\figref{fig:rotate_fig}). We show that at this configuration represents a high-quality upper bound of an optimal solution.

\begin{theorem}  \label{lem:atmost}
    Let the straight segments of $\mathcal{P}_1$ and $\mathcal{P}_2$ of $o_j^{pre,1}$ be $S_1$ and $S_2$, respectively. Then $C(o_j^{pre,1})$ is larger than an optimal cost by at most $\mathcal{L}(S_1)+\mathcal{L}(S_2)$.
\end{theorem}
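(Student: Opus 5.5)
The natural route is to split the seed's cost into its arc part and its straight part, and then bound only the arc part from above by the cost of an arbitrary feasible prerelocation. By construction, the seed $o_j^{pre,1}$ yields $\mathcal{P}_1$ of type $SC$ (or $CS$) and $\mathcal{P}_2$ of type $S$. Hence
\[
C(o_j^{pre,1}) = \mathcal{L}(S_1) + \mathcal{L}(S_2) + \mathcal{L}(A),
\]
where $A$ is the single arc of $\mathcal{P}_1$. Everything then reduces to showing $\mathcal{L}(A) \le C^*$, where $C^*$ is the optimal cost of eq.~\eqref{eq:argmin}.

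\textbf{Rotating into a 3PDP.} Following the conversion illustrated in \figref{fig:rotate_fig}, I would rigidly rotate $\mathcal{P}_2$ and $p_j^{g,l}$ about $o_j^{pre}$ so that the heading of $p_j^{pre,a}$ matches the heading of the rotated $p_j^{pre,b}$. A rigid rotation preserves path lengths, so the concatenated curve $\mathcal{P}_1\cup\mathcal{P}_2$ becomes a single bounded-curvature curve through three points, and its length is unchanged.

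\textbf{Identifying what the arc compensates.} The total heading change that any feasible pair $(\mathcal{P}_1,\mathcal{P}_2)$ must realize is the difference between the start heading at $p_j^{s,k}$ and the goal heading at $p_j^{g,l}$, corrected by the fixed offset between the pushing poses $a$ and $b$ on the object. This offset is a constant determined by the object geometry and does not depend on where the prerelocation is placed. In the seed, all of this heading change is absorbed by the single arc $A$, of radius $\rho_p$, so
\[
\mathcal{L}(A) = \rho_p\,|\Delta\theta|,
\]
where $|\Delta\theta|$ is the minimal required turning angle.

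\textbf{Lower-bounding the optimum.} For any feasible prerelocation, $\mathcal{P}_1$ and $\mathcal{P}_2$ are Dubins paths with curvature at most $1/\rho_p$. Their combined turning must therefore realize the same net heading change modulo $2\pi$, which gives
\[
\mathcal{L}(\mathcal{P}_1)+\mathcal{L}(\mathcal{P}_2) \ge \rho_p \int |\kappa| \ge \rho_p\,|\Delta\theta|.
\]
This holds in particular for the optimum, so $C^* \ge \mathcal{L}(A)$. Combining this with the decomposition gives
\[
C(o_j^{pre,1}) \le C^* + \mathcal{L}(S_1) + \mathcal{L}(S_2).
\]
The symmetric seed ($S$ followed by $CS$) is handled identically.

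\textbf{Main obstacle.} The key step is justifying $\mathcal{L}(A) \le \rho_p|\Delta\theta|$ with $|\Delta\theta|$ the \emph{minimal} required turn in $[0,\pi]$. The seed's arc might turn the long way, through an angle greater than $\pi$, and the pushing-pose switch at the prerelocation could in principle change the required heading by a multiple of $\pi/2$. I would handle this first by choosing the arc direction ($L$ or $R$) and the pushing pose $b$ in the seed construction so that the turning angle is the minimal one consistent with the same pushing-pose pair $(a,b)$ used by the optimum. Failing that, I would state the result relative to the optimum over prerelocations with that fixed $(a,b)$ pair. Collision feasibility of the seed is assumed as in the construction.
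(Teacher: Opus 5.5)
Your proposal is correct and follows essentially the paper's argument: the seed's cost is a single arc of length $\rho\Delta\theta$ plus $\mathcal{L}(S_1)+\mathcal{L}(S_2)$, and any feasible prerelocation must rotate the object by $\Delta\theta$ with turning radius at least $\rho$, so the optimum is at least $\rho\Delta\theta$. The paper tracks the object's orientation directly, which changes only while being pushed, so it needs neither your 3PDP rotation nor the pushing-pose offset bookkeeping; it also silently assumes that the seed's arc turns the short way, which is exactly the caveat you flag.
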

\begin{proof}
    To meet the orientation goal of rearranging $o_j^{s,k}$ to $o_j^{g,l}$, the robot is required to rotate the object by $\Delta\theta=|\theta_j^{s,k}-\theta_j^{g,l}|$ where $\theta_j^{s,k}$ and $\theta_j^{g,l}$ are the orientation of $o_j^{s,k}$ and $o_j^{g,l}$, respectively. Because the robot is constrained by the minimum turning radius $\rho$, the robot needs to turn along its turning arc of at least $\rho\cdot\Delta\theta$ length. Thus, any valid solution of $o_j^{pre}$ must incur a travel distance of at least $\rho\cdot\Delta\theta$, which is the same length as the only $C$ segment of $o_j^{pre,1}$. Therefore, the total length $o_j^{pre,1}=\mathcal{L}(\mathcal{P}_1) + \mathcal{L}(\mathcal{P}_2)$ is at most $\mathcal{L}(S_1)+\mathcal{L}(S_2)$ larger than the optimal cost.
\end{proof}

 In our confined settings, the straight segments are generally short, and concurrently, each Dubins path generally falls into $d<4\rho$. As discussed in prior work, the length of a Dubins path is discontinuous when $d<4\rho$ at the boundary between $CSC$ and $CCC$ \citep{vana2018optimal,shkel2001classification}, with a $CCC$ solution being strictly longer than any $CSC$. As the initialization $o_j^{pre,1}$ is a solution with a straight path in both $\mathcal{P}_1$ and $\mathcal{P}_2$, it is in the $CSC$ region for both paths. While the region is generally not convex, the Dubins path is analytic with respect to $d$ and start/goal orientations as long as the path type remains consistent~\citep{shkel2001classification}. Thus, a gradient-based optimizer will converge to a first-order stationary point, a local minimum that generally poses shorter total length than that of $o_j^{pre,1}$. By the symmetry of the construction, \emph{Theorem}~\ref{lem:atmost} also holds for the other initialization, $o_j^{pre,2}$.

 \textbf{Conditional Completeness}. The overall method is \emph{conditionally complete}, conditioned on the existence of a solution within the search space defined by these primitives: at most one prerelocation, straight-line obstacle clearing, and standard Dubins connections. Since these constraints ensure a finite number of candidate sequences, our DFS complete within the search space.

\begin{figure*}[ht]
    \centering
    \makebox[\textwidth][c]{%
        \begin{subfigure}[b]{0.16\textwidth}
            \centering
            \includegraphics[width=\textwidth]{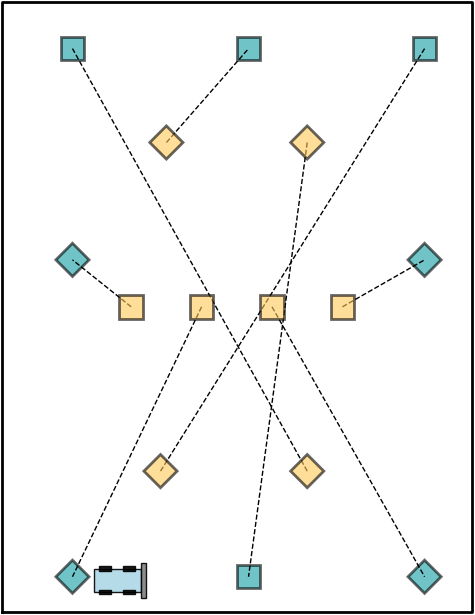}
            \includegraphics[width=\textwidth]{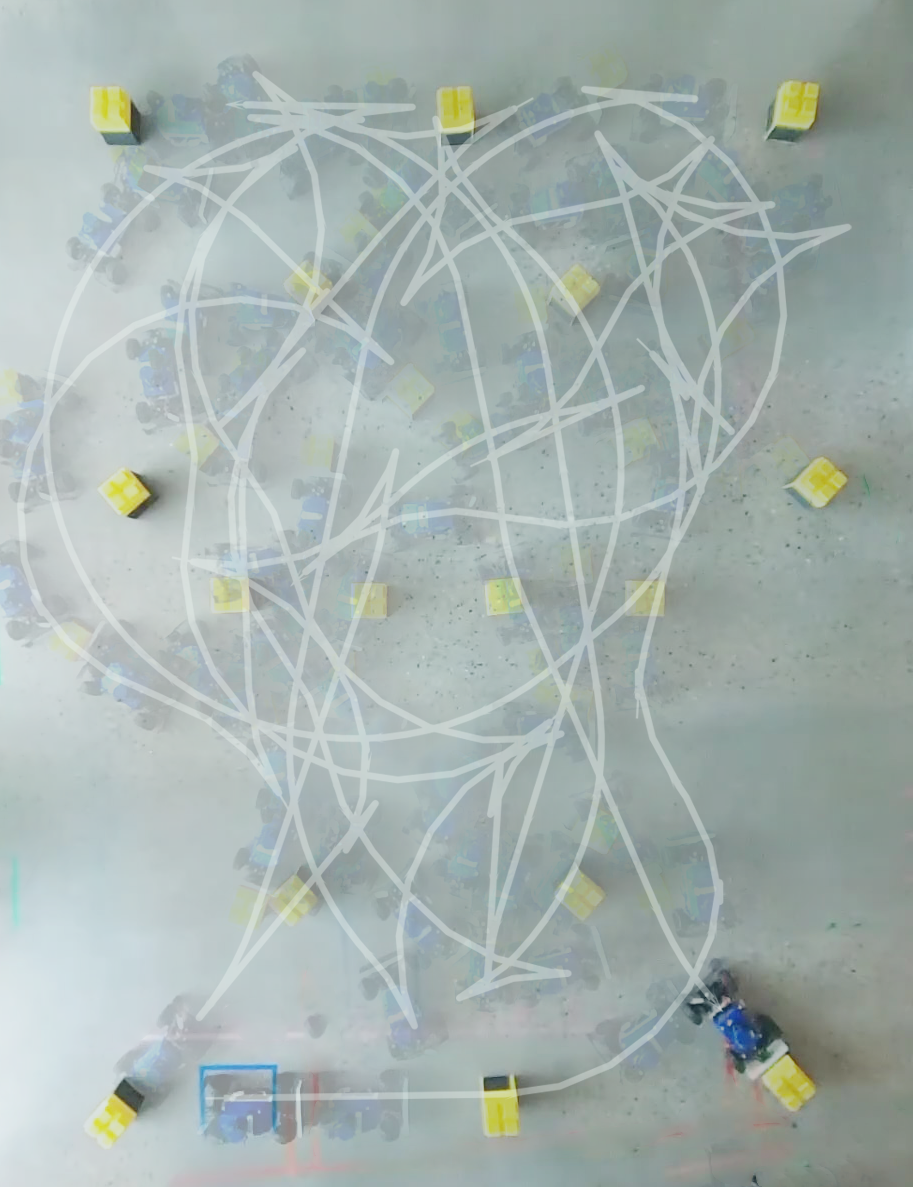}
            \caption{$m=8$}
        \end{subfigure}
        \hfill
        \begin{subfigure}[b]{0.16\textwidth}
            \centering
            \includegraphics[width=\textwidth]{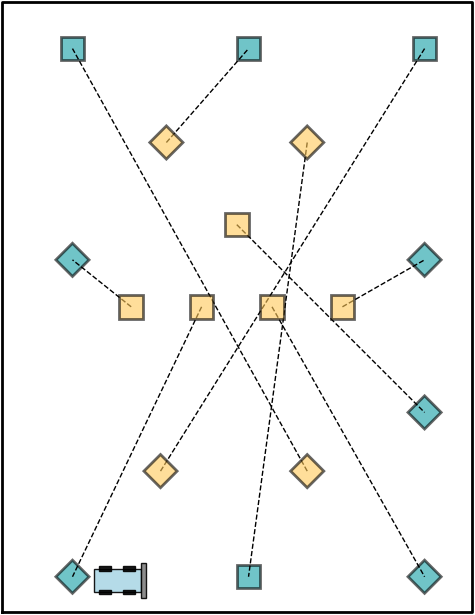}
            \includegraphics[width=\textwidth]{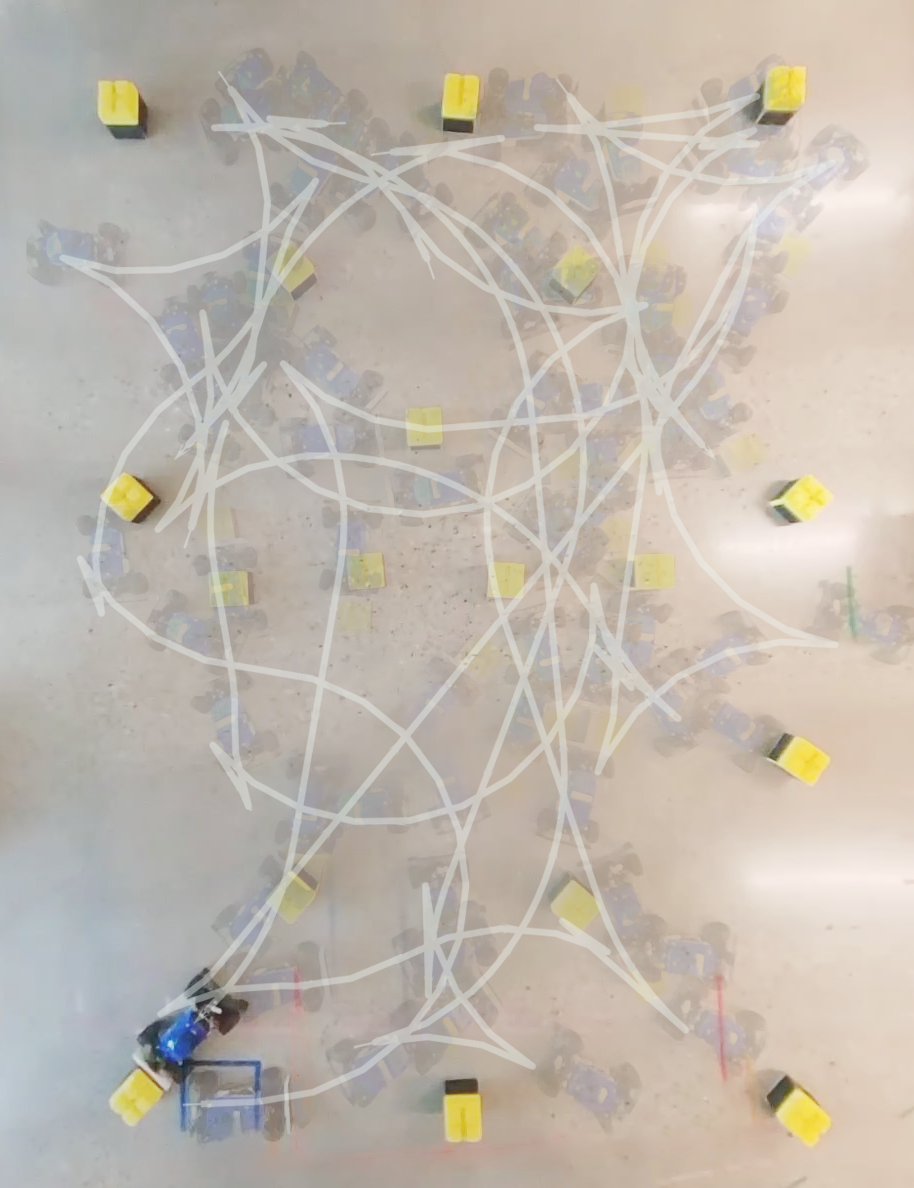}
            \caption{$m=9$}
        \end{subfigure}
        \hfill
        \begin{subfigure}[b]{0.16\textwidth}
            \centering
            \includegraphics[width=\textwidth]{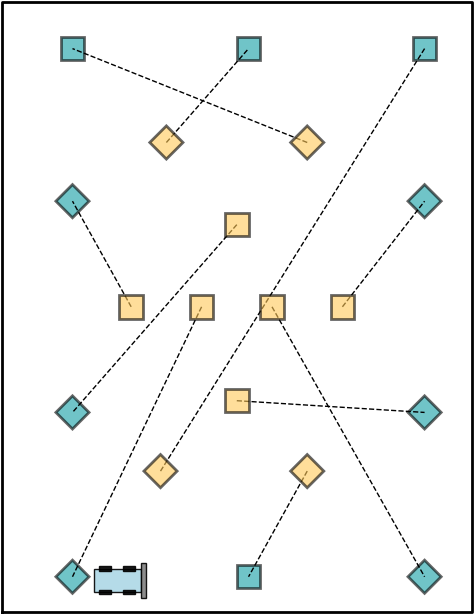}
            \includegraphics[width=\textwidth]{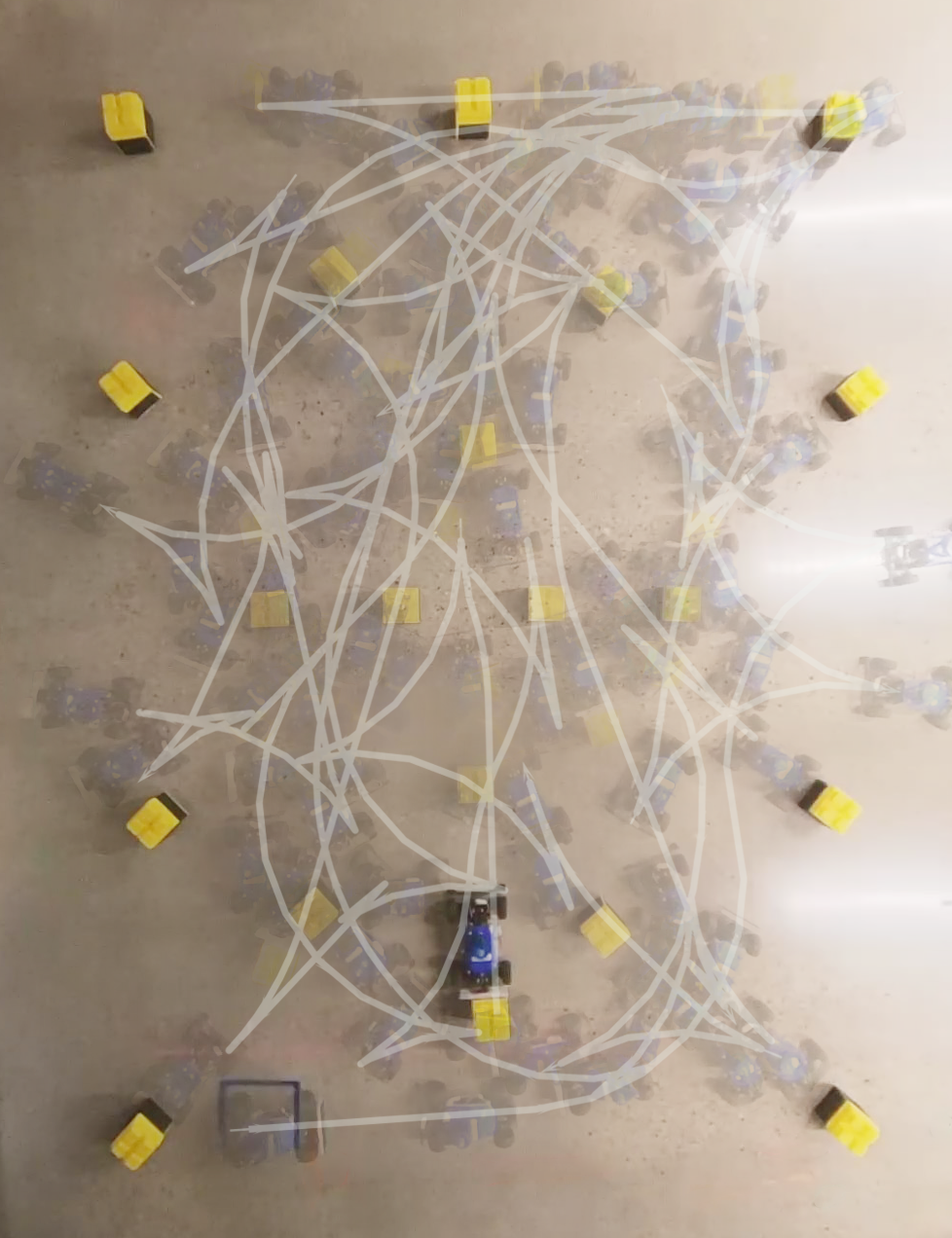}
            \caption{$m=10$}
        \end{subfigure}
        \hfill
        \begin{subfigure}[b]{0.16\textwidth}
            \centering
            \includegraphics[width=\textwidth]{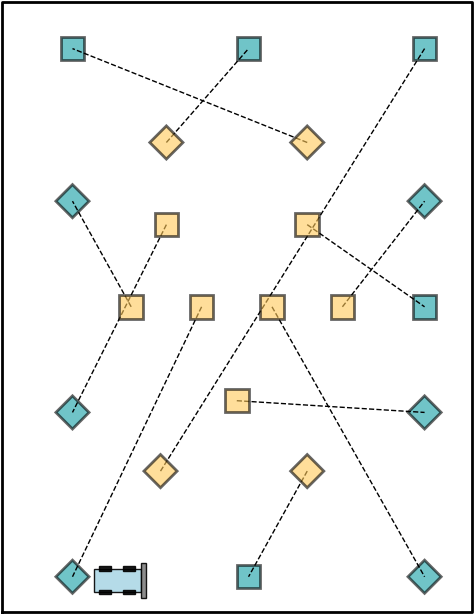}
            \includegraphics[width=\textwidth]{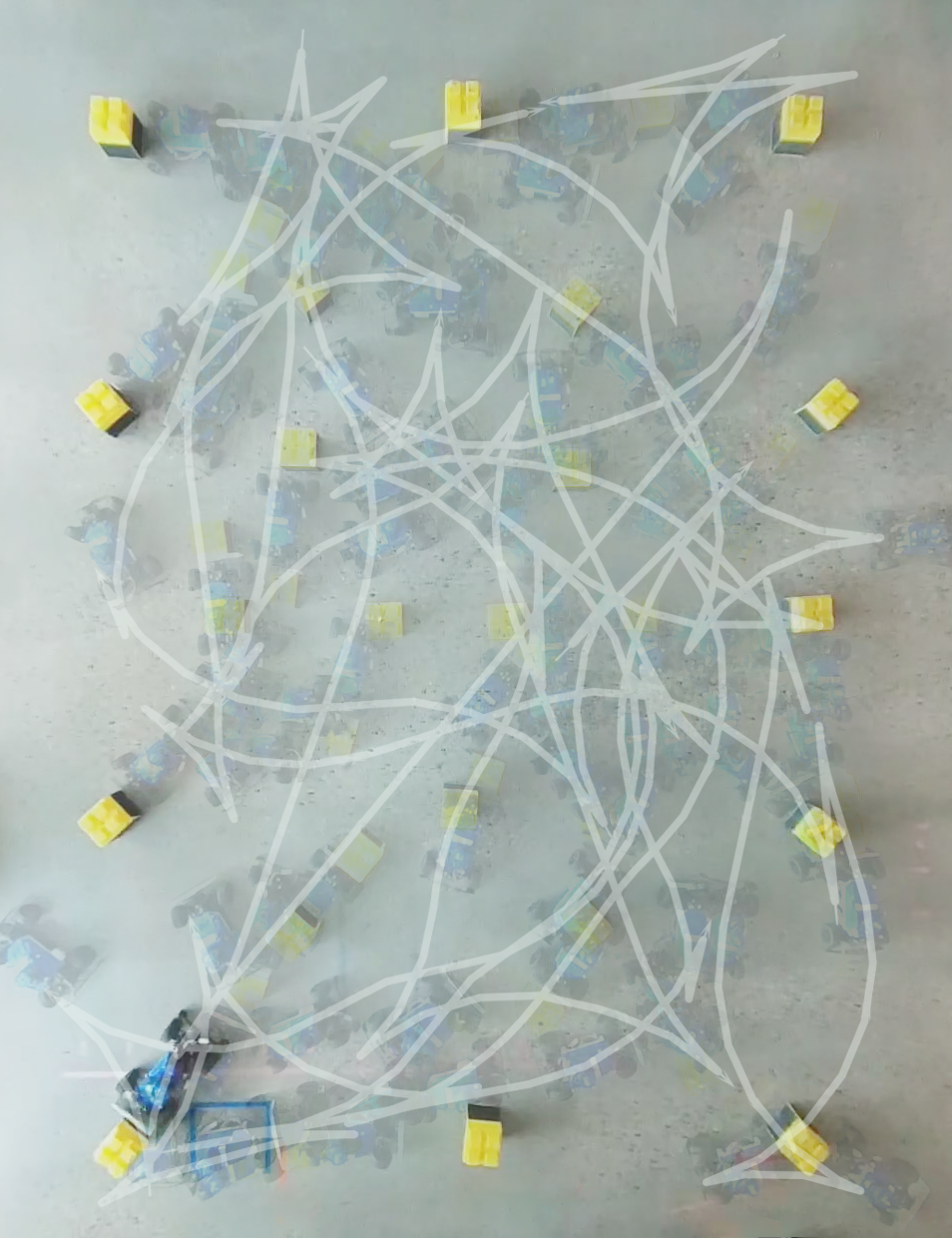}
            \caption{$m=11$}
        \end{subfigure}
        \hfill
        \begin{subfigure}[b]{0.16\textwidth}
            \centering
            \includegraphics[width=\textwidth]{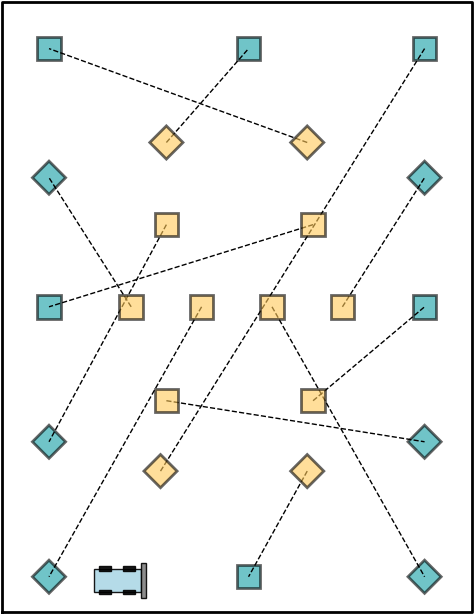}
            \includegraphics[width=\textwidth]{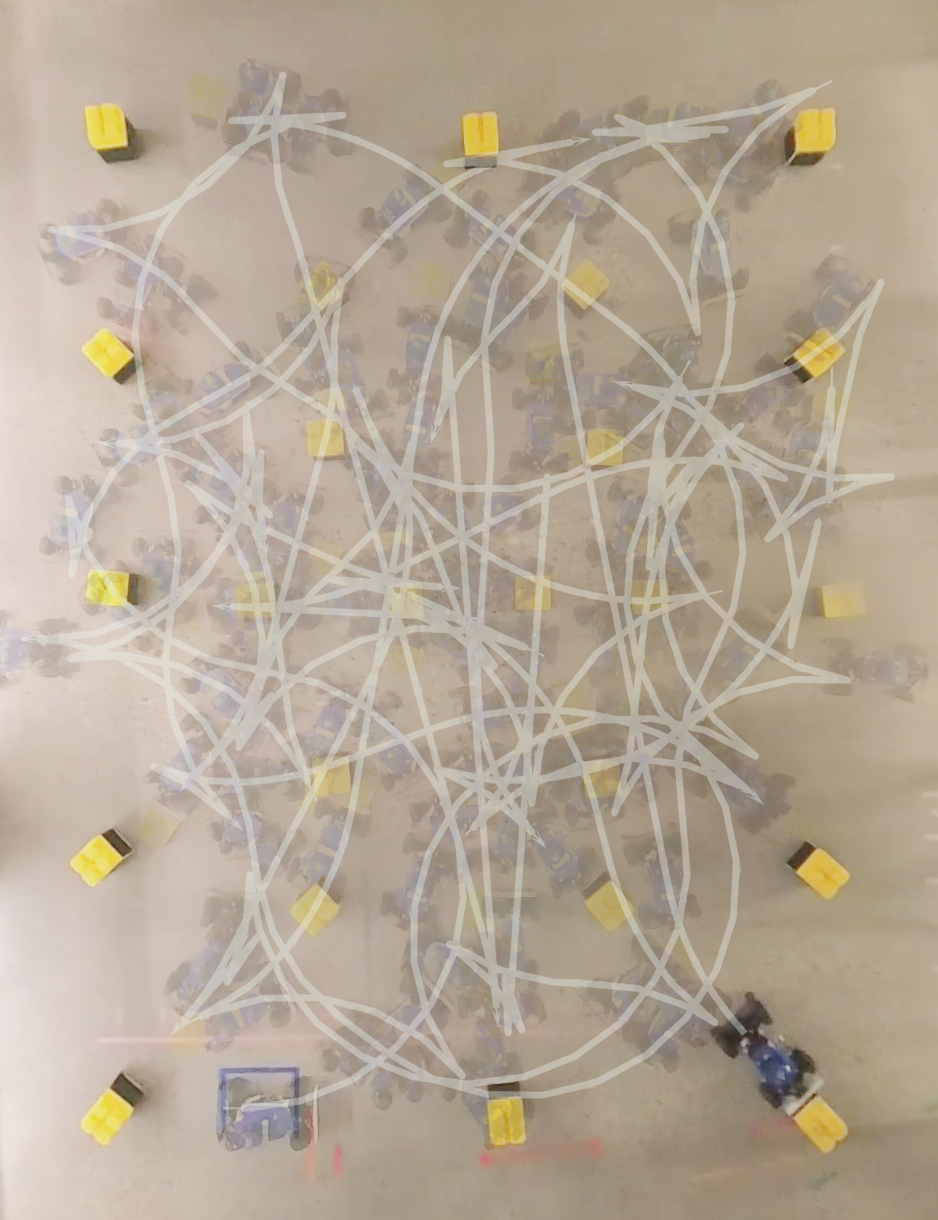}
            \caption{$m=12$}
        \end{subfigure}
        \hfill
        \begin{subfigure}[b]{0.16\textwidth}
            \centering
            \includegraphics[width=\textwidth]{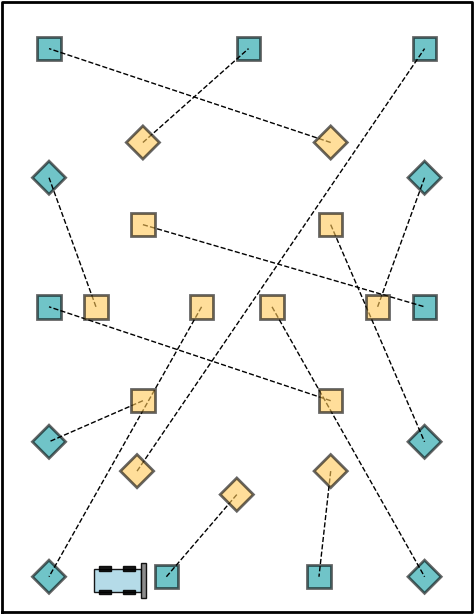}
            \includegraphics[width=\textwidth]{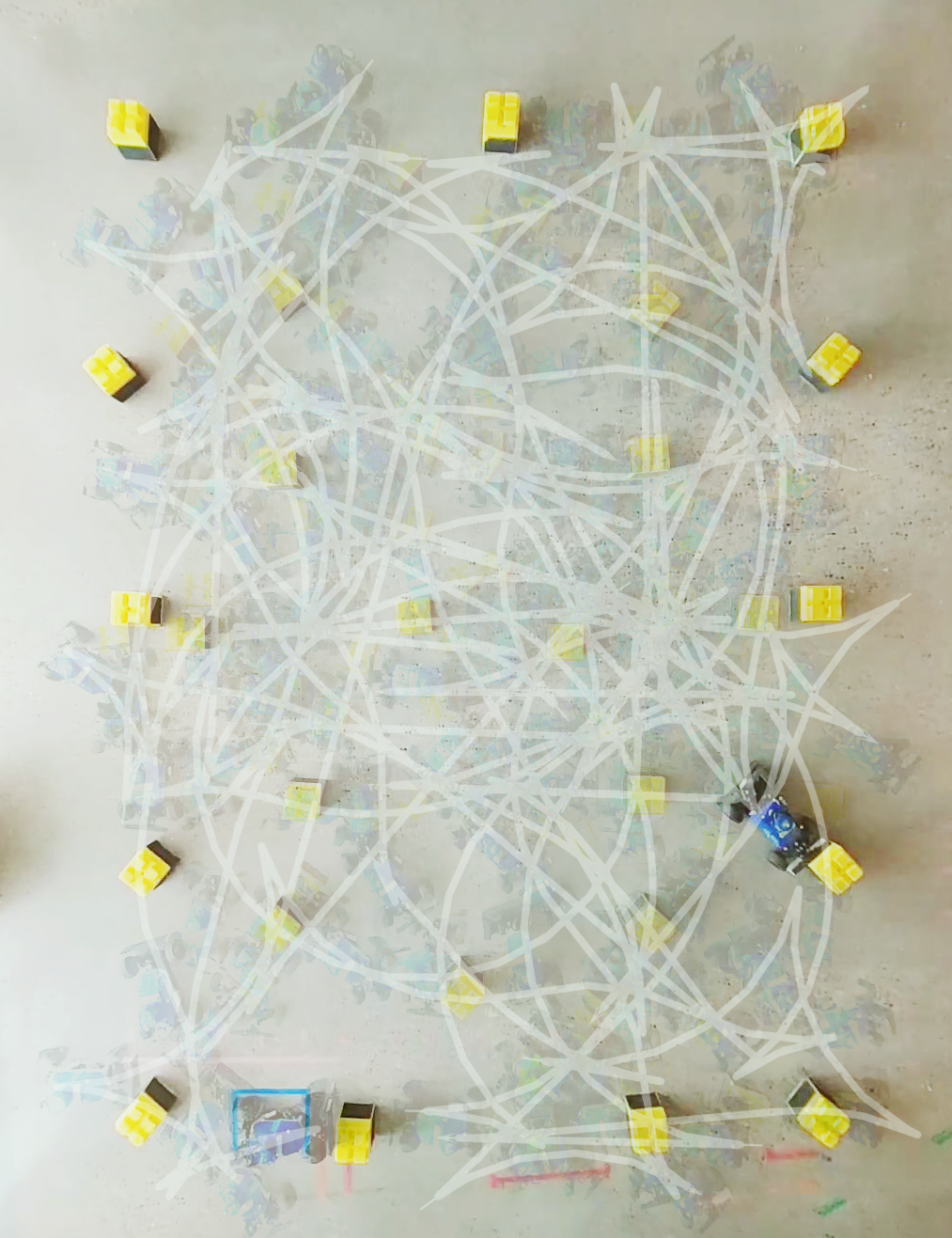}
            \caption{$m=13$}
        \end{subfigure}
    }
    \caption{Top: evaluation scenarios involving $8\sim13$ objects; for each scenario, dotted lines connect initial (blue) and goal (yellow) object poses. Bottom: frame stacks from real-world execution for each scenario, overlaid with the planned path.}

    \label{fig:scenarios}
\end{figure*}

\begin{figure*}[ht!]
    \centering
    \makebox[\textwidth][c]{
        \begin{subfigure}[b]{0.245\textwidth}
            \centering
            \includegraphics[width=\textwidth]{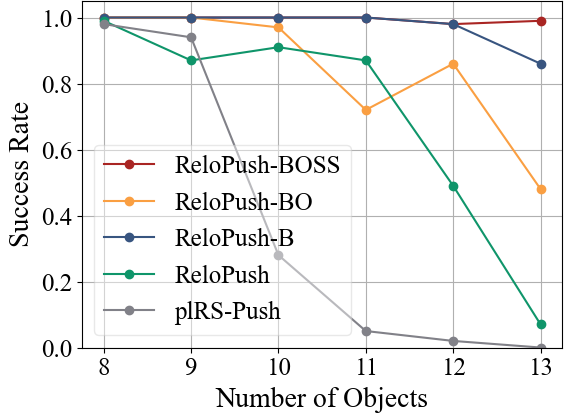}
            \caption{}
        \end{subfigure}
        \hfill
        \begin{subfigure}[b]{0.245\textwidth}
            \centering
            \includegraphics[width=\textwidth]{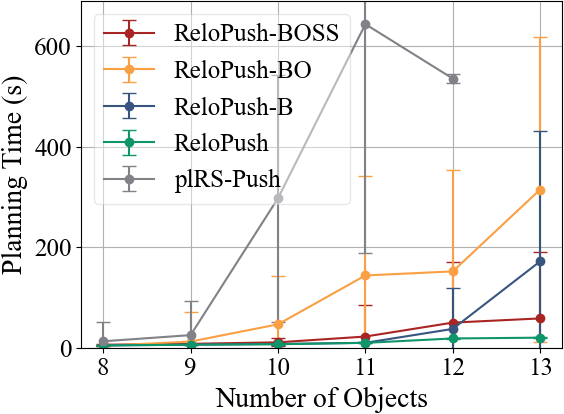}
            \caption{}
        \end{subfigure}
        \hfill
        \begin{subfigure}[b]{0.245\textwidth}
            \centering
            \includegraphics[width=\textwidth]{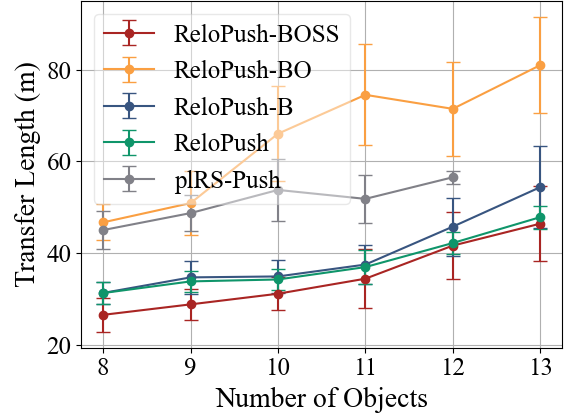}
            \caption{}
        \end{subfigure}
        \hfill
        \begin{subfigure}[b]{0.245\textwidth}
            \centering
            \includegraphics[width=\textwidth]{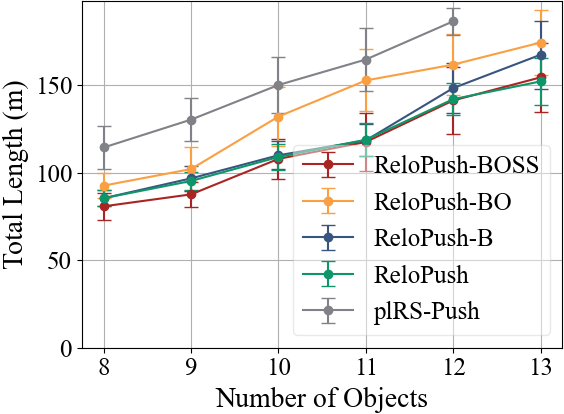}
            \caption{\label{fig:path-length}}
        \end{subfigure}

    }
        \vspace{-15px}
    \caption{Comparative planning performance across all metrics and scenarios.
    \label{fig:eval}}
\end{figure*}

\section{Evaluation}\label{sec:evaluation}

We present an empirical evaluation of \emph{ReloPush-BOSS}, assessing its scalability, robustness, and real-world transfer.







\subsection{Experiment design}

\textbf{Methods}. Our evaluation involves the following methods:

\begin{itemize}
    \item \emph{plRS-Push}. Adaptation of the piecewise-linear Rearrangement Search (plRS) \citep{krontiris2015dealing}, originally for pick-and-place, restricted to \emph{pushing} with a car-like robot. The high-level planner exhaustively enumerates object-order permutations (in randomized order) and returns a plan upon success or after all permutations are exhausted.

    \item \emph{ReloPush}. The vanilla version of ReloPush from our prior work~\citet{ahn2025relopush}.

    \item \emph{ReloPush-B}. \emph{ReloPush} augmented with a \emph{B}acktracking high-level planner (depth-first search) that exhaustively explores the solution tree (Sec.~\ref{sec:backtracking}). During graph construction, prerelocations are not optimized, and are found by sampling points along pushing poses~\citep{ahn2025relopush}.

    \item \emph{ReloPush-BO}. Replaces the search method for \emph{prerelocation} in \emph{ReloPush-B} with gradient-based \emph{O}ptimization but without informed warm-starting (Sec.~\ref{sec:optimization}).

    \item \emph{ReloPush-BOSS}. The proposed method, corresponding to \emph{ReloPush-BO}, augmented with a \emph{S}eeded-\emph{S}tart: each relocation optimization is warm started using the proposed warm-starting (Sec.~\ref{sec:optimization}).
\end{itemize}

\textbf{Metrics}. We evaluate performance in terms of the following metrics: \emph{Success rate} -- fraction of instances with a feasible plan found within 20 min (1{,}200 s); \emph{Planning Time (s)} -- time taken to find a feasible rearrangement plan; \emph{Total Pushing Length (m)} -- sum of all pushing segments for every object; \emph{Total Path Length (m)} -- overall distance traveled by the robot, including both transfer and transit segments. 

\textbf{Scenarios}. All methods were tested across the six scenarios of increasing hardness depicted in~\figref{fig:scenarios}, which involve the rearrangement of \(m=8\) to \(m=13\) objects in constrained workspace. To collect statistical insights, for each scenario, we generate 100 planning instances by locally perturbing the position and the orientation of each object or a goal within a range of $\pm 0.05 \text{ m}$ and $\pm 0.1 \text{ rad}$, respectively.





\subsection{Implementation}
For our experiments, we use an open-source 1/10 racecar from the MuSHR ecosystem~\citep{srinivasa2019mushr,talia2024hound}, augmented with a flat pushing bumper, following prior work~\citep{talia2023pushr,ahn2025relopush}. All scenarios are instantiated in a $4\times5.2\,\text{m}^2$ rectangular workspace (resolution $0.1\, \text{m}$ for map discretizations and collision checking). Objects are rigid square blocks with a $0.15\,\text{m}$ side, $0.44\,\text{kg}$ weight, and measured bumper–object friction of $\sim0.73$. Steering is constrained to a minimum turning radius of $\rho_p=1.43\,\text{m}$ to preserve quasistatic motion during pushing, and $\rho_{np}=1.09\,\text{m}$ during transit. Notably, $\rho_p$ constitutes $\sim35\%$ of the workspace width such that a full turning circle largely does not fit. Following limit-surface calculations~\citep{goyal1991planar,lynch1996stable,howcutkosky}, the minimum turning radius for stable pushing is $\rho_{p,min}=0.815\, \text{m}$, thus our setting represents a conservative bound. We exploit the symmetry of rearranged objects (cubic shape) to allow for multiple possible goal orientations (i.e., each object can be rearranged to one of four possible configurations). ReloPush-BOSS is implemented in C++. Dubins primitives and transit paths are computed using OMPL \citep{sucan2012open} and the Hybrid $A^*$ module of~\citet{wen2022cl}. Collision checking assumes exact geometry for the robot, objects, and workspace boundaries. We use Dijkstra's algorithm for searching on the PT-graph. We modified \textit{plRS} implementation by \citet{song2019object} into a \textit{plRS-Push} variant, suitable for pushing. Our code can be found at: \url{https://fluentrobotics.com/relopushboss}.


\subsection{Planning Performance}

Fig.~\ref{fig:eval} depicts the performance of ReloPush-BOSS against baselines. We see that ReloPush-BOSS attains the highest success rate across the board. Out of successful trials, ReloPush-BOSS also achieves the \emph{lowest} transfer length, and total length (see~\figref{fig:transfer_vis}), whereas its planning time is consistently among the lowest, close to ReloPush. 

\textbf{Hardness of benchmark}. Our benchmark is unique in that it involves the rearrangement of dense clutter of even up to 13 objects inside a constrained workspace. This results in a substantially denser workspace than prior work~\citep{ahn2025relopush,talia2023pushr,tang2023unwieldy, King-RSS-13}. This hardness can be traced in the path lengths required to execute the underlying scenarios, which reached larger than $150\, \text{m}$ for the more challenging ones (see~\figref{fig:path-length}).

\textbf{Optimization-guided robustness}. As the optimization module of ReloPush-BOSS searches over a wider range of $SE(2)$ for prerelocations, it results in a graph that makes more efficient use of the constrained space, resulting in higher success rates even as $m$ grows. A byproduct of that is that ReloPush-BOSS tends to find a solution earlier than other methods, leading to shorter planning times. This is nontrivial as a generic optimization approach generally takes longer than an analytic approach (see ReloPush-B).

\textbf{Value of initial guess}. We see that ReloPush-BO scales poorly with the scenario hardness, exhibiting lower success rates than ReloPush-BOSS and ReloPush-B, and planning inefficient rearrangement paths with excessively high times. In contrast, we see that ReloPush-BOSS, which is just ReloPush-BO augmented with our proposed module for warm-starting the optimizer, leads to scalable and robust performance. As discussed in Sec.~\ref{sec:optimization}, this gap in performance is explained by the nonlinearity of the domain which often traps ReloPush-BO into excessively high-cost local minima.


\textbf{Effectiveness of Depth-First Search}. From the results of ReloPush-B, we observe higher success rates and overall competitive performance against baselines just by integrating a depth-first search high level planner into ReloPush.


\begin{figure}[h!]
\centering
\begin{subfigure}{0.39\linewidth}
\includegraphics[width=\linewidth]{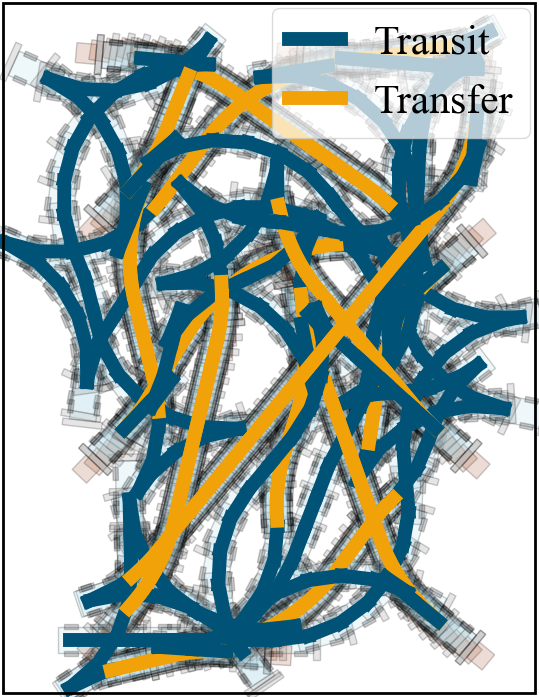}
\caption{ReloPush-BOSS}
\label{fig:ptgraph_path}
\end{subfigure}
\begin{subfigure}{0.39\linewidth}
\includegraphics[width=\linewidth]{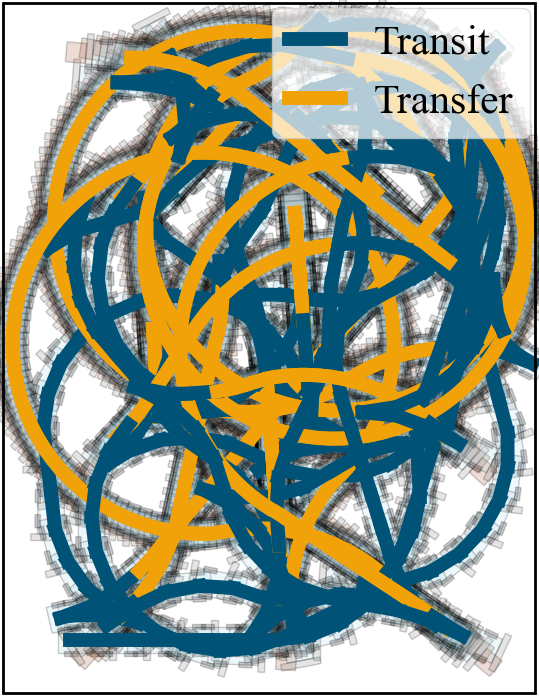}
\caption{plRS-Push}
\label{fig:ptgraph_}
\end{subfigure}
 \vspace{-1px}
    \caption{Paths resulting from tackling the $m=9$ instance using our method (\subref{fig:ptgraph_path}), and pIRS-Push (\subref{fig:ptgraph_}). Yellow and blue lines represent push-transfer and transit path portions, respectively.}
  \label{fig:transfer_vis}
   \vspace{-2pt}
\end{figure}

\vspace{-1pt}
\subsection{Real-Robot Demonstration}
We deployed the framework on a MuSHR platform~\citep{srinivasa2019mushr} to execute randomized benchmark instances. Using overhead motion capture for robot localization and MPC for path tracking, we successfully completed all trials, including a 13-object task, under open-loop object manipulation. These results (\figref{fig:scenarios}) demonstrate robustness against real-world friction and tracking errors. Footage from our experiments can be found at: \url{https://fluentrobotics.com/relopushboss}.


\vspace{-2px}
\section{Limitations}
The conservative bound on the turning radius makes planning in constrained spaces especially challenging. Ongoing work looks at relaxing it through the incorporation of data-driven models of pushing that account for variations in dynamics. Despite strong performance across complex scenarios, the proposed method is limited by its dependence on at least one unarranged object being accessible to the robot and by the consideration of at most one prerelocation per object. In severely cluttered workspaces, the robot may find no clear path to any object or optimization solution within the workspace. Future work will leverage impulsive manipulation to ``break'' the clutter and open up free paths. Finally, the implementation of obstacle clearing as a straight-line displacement is limiting and can be improved by leveraging the machinery of prerelocation optimization. 

\vspace{-2px}




\balance
\footnotesize
\bibliographystyle{abbrvnat}
\bibliography{references}



\end{document}